\def\BState{\State\hskip-\ALG@thistlm}
\algnewcommand\algorithmicforeach{\textbf{for each}}
\algnewcommand{\LineComment}[1]{\Statex \hskip\ALG@thistlm #1}
\newcommand*\pFqskip{8mu}
\newcommand*\pFq{\begingroup
        \catcode`\,\active
        \def ,{\mskip\pFqskip\relax}%
        \dopFq
}
\def\dopFq#1#2#3#4#5{%
        {}_{#1}F_{#2}\biggl[\genfrac..{0pt}{}{#3}{#4};#5\biggr]%
        \endgroup
}
\DeclareMathOperator*{\argmax}{\arg\max}
\newcommand{\bs}[1]{\boldsymbol{#1}}
\let\oldgls=\gls\renewcommand{\gls}[1]{{\hypersetup{linkbordercolor=black, linkcolor=black}\oldgls{#1}}}
\newacronym{fwer}{FWER}{family-wise error rate}
\newacronym{fdr}{FDR}{false discovery rate}
\newacronym{tdr}{TDR}{true discovery rate}
\newacronym{bh}{BH}{Benjamini-Hochberg}
\newacronym{iid}{iid}{independent and identically distributed}
\newacronym{prds}{PRDS}{positive regression dependent on a subset}
\newacronym{ocsvm}{OCSVM}{one class support vector machines}
\newacronym{svc}{SVC}{support vector classification}
\newacronym{lr}{LR}{logistic regression}
\newacronym{ood}{OOD}{\emph{out-of-distribution}}
\newacronym{cdf}{CDF}{cumulative distribution function}
\newacronym{mcp}{MCP}{\emph{model context protocol}}
\newacronym{mlp}{MLP}{multilayer perceptron}
\newtheorem{definition}{Definition}
\newtheorem{remark}{Remark}
\newtheorem{thm}{Theorem}
\newenvironment{theorem}{\vspace{-0.2cm}\par\noindent\hrulefill\begin{thm}}{\vspace{-0.2cm}\par\noindent\hrulefill\end{thm}}
\newtheorem{cor}{Corollary}
\newenvironment{corollary}{\vspace{-0.2cm}\par\noindent\hrulefill\begin{cor}}{\vspace{-0.2cm}\par\noindent\hrulefill\end{cor}}
\title{Conformal Data Contamination Tests for \\ Trading or Sharing of Data}
\author{%
  Martin V. Vejling \\
  Department of Mathematical Sciences and \\
  Department of Electronic Systems\\
  Aalborg University\\
  Aalborg, Denmark \\
  \texttt{mvv@\{math,es\}.aau.dk} \\
  % examples of more authors
  \And
  Shashi Raj Pandey \\
  Department of Electronic Systems \\
  Aalborg University \\
  Aalborg, Denmark \\
  \texttt{srp@es.aau.dk} \\
  \AND
  Christophe A. N. Biscio \\
  Department of Mathematical Sciences \\
  Aalborg University \\
  Aalborg, Denmark \\
  \texttt{christophe@math.aau.dk} \\
  \And
  Petar Popovski \\
  Department of Electronic Systems \\
  Aalborg University \\
  Aalborg, Denmark \\
  \texttt{petarp@es.aau.dk} \\
  % \And
  % Coauthor \\
  % Affiliation \\
  % Address \\
  % \texttt{email} \\
}
\begin{document}
%\begin{bibunit}[agsm]

\maketitle

\begin{abstract}
    The amount of quality data in many machine learning tasks is limited to what is available locally to data owners. The set of quality data can be expanded through trading or sharing with external data agents. However, data buyers need quality guarantees before purchasing, as external data may be contaminated or irrelevant to their specific learning task.  Previous works primarily rely on distributional assumptions about data from different agents, relegating quality checks to post-hoc steps involving costly data valuation procedures. We propose a distribution-free, contamination-aware data-sharing framework that identifies external data agents whose data is most valuable for model personalization. To achieve this, we introduce novel two-sample testing procedures, grounded in rigorous theoretical foundations for conformal outlier detection, to determine whether an agent’s data exceeds a contamination threshold. The proposed tests, termed \emph{conformal data contamination tests}, remain valid under arbitrary contamination levels while enabling false discovery rate control via the Benjamini-Hochberg procedure. Empirical evaluations across diverse collaborative learning scenarios demonstrate the robustness and effectiveness of our approach. Overall, the conformal data contamination test distinguishes itself as a generic procedure for aggregating data with statistically rigorous quality guarantees.

\end{abstract}

\section{Introduction}\label{sec:intro}

In many real-world machine learning applications the amount of quality data for training is limited to what is locally available to the data owners.
% The availability of quality data for training 
% \mv{and in many real-world applications this is restrictive,  with data limited locally at the data owners.}
% \forest{\sout{However, in many real-world applications, data accessibility is an issue: the availability of data is limited locally at the data owners, and the absence of appropriate incentives, i.e., monetary compensation for sharing private data, demotivates arbitrarily buying and selling data between data agents for model training.}}
Collaborative learning techniques have shown some potential by allowing multiple participants to contribute data and jointly train a model in a privacy-preserving manner~\citep{Mcmahan:2017:Communication}, however, relying solely on distributed and diverse private data from collaborators in the learning process significantly affects \emph{personalization}~\citep{blum2017collaborative}. Unlike federated learning \citep{Mcmahan:2017:Communication}, several practical scenarios consider a data owner having specific learning goals, and not being interested in having a \emph{well-generalized} model that works for the rest of the participants.
This indicates that \emph{not all} data, but only data with \emph{specific attributes}, are relevant for the learning agent, and motivates this work to resolve challenges of quality data acquisition.
% This indicates that \emph{not all data are equally relevant} for the learning agent, and motivates this work to resolve challenges of quality data acquisition.

% For example, a data agent might be interested in knowing the tails of the distribution of some local statistics in an environment, where other observant data agents are present. To guarantee a certain accuracy on the estimation of the tails, the data agent must acquire more data samples that follow the same distribution. In this case, data from other data agents might be useful; however, there is no guarantee that the other data agents can supply data of the same distribution. Therefore, a prior assessment of the degree to which the distributions match is imperative before buying data. \mv{Specifically, such a problem arises when attempting to provide reliability guarantees in wireless communication through estimating the distribution of fading statistics \citep{Kallehauge2024:Experimental}.}
For example, a data agent might be interested in knowing the tails of the distribution of some local statistics in an environment where other observant data agents are present. To guarantee a certain accuracy, acquiring data from the other data agents is necessary, however, there is no guarantee that others' data follow the same distribution, making distributional assessment prior to data sharing essential. For instance, in wireless communication, estimating fading statistics for reliability guarantees demands matched data distributions \citep{Kallehauge2024:Experimental}.
In another example, a hospital is interested in training a model to detect a rare disease based on an X-ray image from a patient. Since the disease is rare, the hospital does not have sufficient local data to train the model, but when buying data from other hospitals, or a \emph{data marketplace}~\citep{Fernandez2023:Data}, it is important to ensure similar data conditions, as different X-ray machines and softwares may have been used to take the measurements \citep{Alqaness2024:Chest}.
% In another example, a hospital aiming to detect a rare disease using X-rays may lack sufficient local data and must ensure that externally acquired data, possibly collected using different machines or software, are compatible \citep{Alqaness2024:Chest}.} These examples outline the need for having principled data trading or sharing protocols tailored to specific learning goals.
These examples outline the need for having principled data trading or sharing protocols tailored to specific learning goals.

% \forest{For example [SRP: placeholder example - to be replaced with healthcare applications], a data agent might be interested in having a classifier that gives accurate prediction for a specific class set, say \{1, 7, 9\} of hand-written digits, rather than a good enough accuracy for labeling 0-9 class. A natural choice for such data agent is to buy more data samples with labels \{1, 7, 9\}, instead otherwise.  Furthermore, a prior assessment of data value has to be made -- in this example, validating whether the data consists labeled samples of interest, before deciding to buy and use it for model training.
% Collaborative data sharing paradigms have emerged to address the challenges in such cases, where a data agent can strategically buy training data from other data agents to improve the model's personalization performance[REF]. While data sharing appears costly and challenging considering the communication effort, heterogeneity in data, data quality and privacy level of each data point, in many practical settings, participants can benefit greatly by sharing data of interest for model training to improve personalization and/or generalization~\citep{Fernandez2020:Data}.}

Quality data acquisition approaches have motivated novel data trading or sharing mechanisms, such as \emph{data markets}, where the data agent can prioritize access to relevant data for better personalization \citep{Fernandez2020:Data}. In a nutshell, we assume that \emph{in-distribution} implies relevant data, similar to the training samples, for improving personalization. In contrast, \gls{ood} indicates outliers and unseen samples, including data with potential contamination, which may hold value for better generalization.
% \forest{\sout{For such data acquisition, incentives act as the primary stimulus operating in a data market: incentivizing data agents to share relevant and quality data requires appropriate compensation} \citep{Sim2023:Incentives}. \sout{Furthermore, along with the appropriate incentives,}}
A participant's benefit from the data of other participants hinges on careful data selection per its learning goal, as outlined in the earlier real-world examples, which has motivated numerous studies on developing efficient data valuation techniques, such as Shapley value~\citep{Ghorbani2019:Shapley} and its variants, amongst others.
Unfortunately, these approaches are still limited: first, they are model-specific and computationally expensive;
% \sout{second, they are often done \emph{post-hoc}, after the data are collected, which might be contaminated and of less value -- known only with the costly model retraining, and relies on the distributional assumptions of the validation data; and third, detecting the right set of collaboration partners without making any distributional assumption on the data to share the most relevant data is challenging.}}
second, they are often used \emph{post-hoc}, after the data, which might be contaminated and of less value, have been collected; and third, detecting the optimal set of collaboration partners, without making any distributional assumptions on their data, is challenging.
This raises some fundamental questions in developing data sharing mechanisms prior to training. Furthermore, in a networked system, any of such mechanisms should be compatible with emerging peer-to-peer collaboration protocols between decentralized data sources and AI-powered tools, such as \gls{mcp}. From the perspective of \gls{mcp}, the data agents are the local data sources and an \gls{mcp} host then wants to access the data from the local data sources which are relevant for personalization. In this premise, the focus of this work is to develop an agile data trading framework that identifies valuable training data, \emph{without any distributional assumption}, to buy and sell between data agents, through the formalization of guarantees to data contamination tests.

\textbf{Data contamination model:} Consider that a local data agent observes data $\mathcal{D}_0 = \{(X_i, Y_i)\}_{i=1}^n$ of $n$ input-output pairs, where $X_i \in \mathbb{R}^d$ is the observed feature and $Y_i$ is the response with $Z_i = (X_i, Y_i) \sim P_0$ for an unknown local distribution $P_0$. In the data sharing platform, there are $K$ other data agents with each their own local data $\mathcal{D}_k = \{(X_i^k, Y_i^k)\}_{i=1}^{m_k}$ of $m_k$ input-output pairs with $Z_i^k = (X_i^k, Y_i^k) \sim \Tilde{P}_k$ for unknown local distributions $\Tilde{P}_k$. Each of the unknown local distributions can be decomposed as $\Tilde{P}_k = (1-\pi_k)P_0 + \pi_k P_k$ where $P_k$ is a proper outlier distribution and $\pi_k \in [0, 1]$ is the contamination factor \citep{Blanchard2010:Semi}.
If the learning goal is better personalization, as is the focus of our work, finding the best data agent(s) for data sharing naturally boils down to learning who has the least contaminated data.
% and pairing with them for a specific learning goal, e.g., better personalization, which is the focus of our work.

\textbf{General data sharing procedure and challenges:} \emph{First}, upon the availability of data from $K$ data agents in the data sharing platform, the data requester, say data agent $0$, must decide which data agent(s) to acquire more data from to improve personalization \citep{Lee2018:Simple}.
% \sout{based on the following considerations: \textit{(Security)} detecting an adversary sharing malicious or artificial data} \citep{Lee2018:Simple}\sout{; \textit{(Personalization)} collaborating with data agents observing a similar process can have data useful for personalization} \citep{Lee2018:Simple}\sout{; \textit{(Generalization)} data agents observing a different process can have data useful for improving robustness and generalization} \citep{Geng2022:Decentralized}. \mv{MV: Since we are framing the paper from the perspective of personalization, should we discard these other points, i.e., security and generalization.}
Once the first problem of whom to pair with is resolved, the data sharing procedure is executed in rounds, where, in each round, a batch of data is acquired and used to decide the policy in the next round; the process terminates when the data has been exhausted, the data purchasing budget has been depleted, or data agent $0$ deems that acquiring more data is not necessary. \emph{Second}, after the termination of data sharing, data agent $0$ uses a data subset selection technique, for instance a complex data valuation technique \citep{Koh2017:Understanding,Ghorbani2019:Shapley,Yoon2020:Reinforcement}, or a simpler outlier detection method \citep{Scholkopf2001:Estimating,Hawkins2002:Outlier,Ruff2018:Deep}, to filter out \gls{ood} datapoints and consequently improve personalization.
% , such as \cite{Scholkopf2001:Estimating,Hawkins2002:Outlier,Ruff2018:Deep}, to filter out data points $Z_i^k \sim P_k$; thus only using data that are likely from $P_0$ for its local model training and, in doing so, improves personalization. This is a low complexity approach to data subset selection compared to the more complex data valuation techniques, such as \cite{Koh2017:Understanding,Ghorbani2019:Shapley,Yoon2020:Reinforcement}. An overview of the data sharing procedure is given in Figure~\ref{fig:protocol_figure}.

\begin{figure}
    \centering
    \includegraphics[page=1, width=0.82\linewidth]{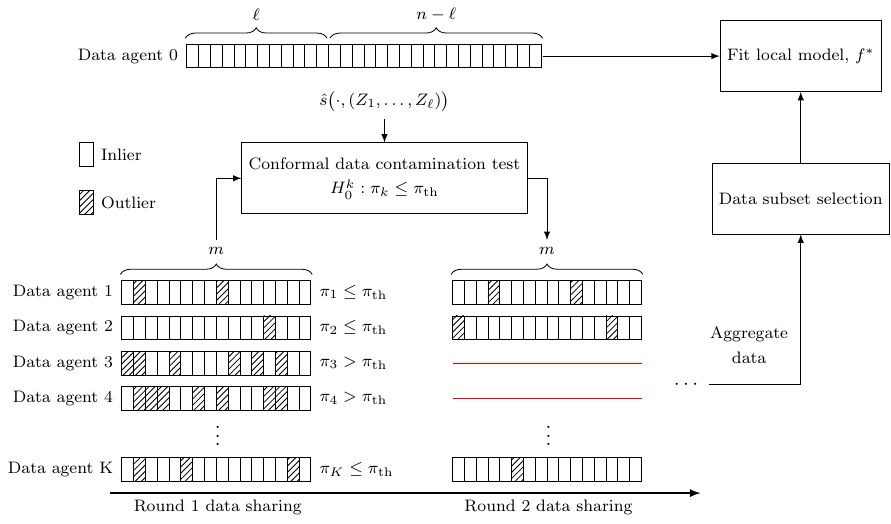}
    \caption{\textit{Proposed data sharing procedure:}
    % Overview of the proposed collaborative data sharing procedure in the perspective of data agent 0.
    % The data agent uses its local data to fit a conformal score and keep some data for calibration.
    In the first round, $m$ samples are received at data agent $0$ from each of the other $K$ data agents, conformal p-values are computed, and the proposed conformal data contamination test is performed. 
    In the following round, data agent $0$ only acquire data from the data agents which was not rejected in the conformal data contamination test.
    % The result is that in the following round data agent $0$ only receives data from the selected subset of other data agents.
    Finally, a data subset selection technique is used to filter away OOD data, and the local model is trained.}
    % All the received data is aggregated and conformal outlier detection is used to filter out contaminated data samples. Finally, using the local and received inlier data, the local model is fitted.}
    \label{fig:protocol_figure}
\end{figure}

Based on the data contamination model, the first challenge can be posed as testing null hypotheses $H_0^1,\dots,H_0^K$ where $H_0^k : \pi_k \leq \pi_{\rm th}$ for a user-specified contamination factor threshold $\pi_{\rm th} \in [0, 1)$. The idea is that we reject $H_0^k$ when we have sufficient evidence that the data from the $k$-th data agent is contaminated beyond the threshold we allow. A theoretical (but impractical) solution to this problem has been considered in \cite{Blanchard2010:Semi}, and some existing works are occupied with estimating the contamination factor \citep{Ramaswamy2016:Mixture,Perini2023:Estimating}; however, \cite{Ramaswamy2016:Mixture} provide no distributional guarantees, and \cite{Perini2023:Estimating} uses a complex Bayesian approach requiring many distributional and prior assumptions.

\textbf{Main Contributions:} We propose a distribution-free solution for testing $H_0^1,\dots,H_0^K$ with \gls{fdr} control guarantees which builds on the ideas of conformal outlier detection \citep{Bates2023:Testing,Marandon2024:Adaptive}, and apply the developed tools to the data sharing scenario, as outlined in Figure~\ref{fig:protocol_figure}.
% \mv{In doing so, we establish a framework for initializing collaboration through data sharing which is compatible with \gls{mcp}.}
% \mv{In doing so, we establish a framework for initializing collaboration through data sharing which is compatible with emerging protocol level solutions, for instance the \gls{mcp}. From the perspective of \gls{mcp}, the data agents are the local data sources and an MCP host then wants to access the data from the local data sources which is relevant for personalization.}
The summary of our contributions are the following:\vspace{-0.5em}
\begin{itemize}[leftmargin=1.5em]
    \setlength\itemsep{0.15em}
    \item We introduce a class of conformal data contamination p-values for testing $H_0^k : \pi_k \leq \pi_{\rm th}$, $k=1,\dots,K$, without any distributional assumptions which provably controls the false rejection probability, and generalize the combination tests of \cite{Bates2023:Testing} from the special case of $\pi_{\rm th} = 0$ to any $\pi_{\rm th} \in [0, 1)$. Moreover, the tests provide a lot of flexibility with the choice of the conformal score, and as such is compatible with a wide range of outlier detection methods.
    % \item we illustrate how the proposed methods generalize the combination tests of \cite{Bates2023:Testing} from the special case of $\pi_{\rm th} = 0$ to any $\pi_{\rm th} \in [0, 1)$;
    \item We show that the p-value sequence for testing $H_0^1,\dots,H_0^K$ is \gls{prds} thereby allowing for \gls{fdr} control using the \gls{bh} procedure.
    \item We propose a data sharing procedure developed for personalization which uses the conformal data contamination tests to initialize the collaboration of data agents through data sharing while providing theoretical guarantees.
    \item Numerical experiments on the MNIST and FEMNIST datasets are conducted to validate the effectiveness of the conformal data contamination tests and the proposed data sharing procedure.
\end{itemize}

\section{Preliminaries}\label{sec:preliminaries}

\textbf{\textit{Notations :}}
Let $[n] = \{1, \dots, n\}$ and $[n]_0 = \{0, \dots, n\}$ denote index sets for $n \geq 1$, and in an abuse of notation let $[n]/m = \{1/m, \dots, n/m\}$.

\subsection{Multiple testing}\label{subsec:multiple_testing}
Consider a multiple testing scenario in which $K$ hypotheses $H^{1}_{0}, \dots, H^{K}_{0}$ are tested, and denote by $p_{1}, \dots, p_{K}$ the p-values.
We define $S$ as the number of rejected null hypotheses that are actually false, and $V$ as the number of rejected null hypotheses that are actually true (type I error). The total number of rejected hypotheses is given by $R=V+S$, while $K_0$ represents the total number of true null hypotheses.

A widely studied error rate in multiple hypothesis testing is the \gls{fdr}, introduced by \cite{Benjamini1995:FDR}.
% It is defined as $\mathrm{FDR}=\mathbb{E}[\mathrm{FDP}]$, where the false discovery proportion is given by $\mathrm{FDP} = V/\max(1, R)$. Similarly, the power of a test, also known as the \gls{tdr}, is defined as $\mathbb{E}[{\rm TDP}]$, where the true discovery proportion is expressed as ${\rm TDP} = S/\max(1, K-K_0)$.
It is defined as $\mathrm{FDR}=\mathbb{E}[V/\max(1, R)]$. Similarly, the power of a test, also known as the \gls{tdr}, is defined as $\mathbb{E}[S/\max(1, K-K_0)]$.
The most well-known method for controlling the \gls{fdr} is the \gls{bh} procedure: at level $q^*\in(0,1)$, reject $H^{(1)}_0,\dots, H^{(\kappa)}_0$ with $\kappa = \max\{j\in[K] : p_{(j)}\leq q^* j/K\}$, where $p_{(1)}\leq \cdots \leq p_{(K)}$ are the ordered p-values and $H^{(j)}_0$ is the null hypothesis associated to $p_{(j)}$, hence, the rejection set is $\hat{\mathcal{H}}_1 = \{j \in [K] : p_{j} \leq p_{(\kappa)}\}$. When the p-values are \gls{prds}, the \gls{bh} procedure controls the \gls{fdr} at level $q^* K_0/K$ \citep{Benjamini2001:Dependency}. There exists adaptive procedures which estimate $K_0$, defined as $\hat{K}_0$, and accordingly adjusts the procedure by running it at level $q^* = \alpha K / \hat{K}_0$, for significance level $\alpha \in (0, 1)$ \citep{Storey2003:Strong,Benjamini2006:Adaptive}. With Storey's \gls{bh} procedure, $\kappa_{\alpha, \gamma} = \max\{j\in[K] : p_{(j)}\leq \alpha j / \hat{K}_0\}$, where $\hat{K}_0/K = \sum_{j=1}^K \mathbbm{1}[p_j > \gamma]/(1-\gamma)$ with the hyperparameter $\gamma \in (0, 1)$ controlling the bias-variance trade-off, and we denote the rejection set of this procedure by ${\rm SBH}_{\alpha, \gamma}(p_1,\dots,p_K) = \{j \in [K] : p_{j} \leq p_{(\kappa_{\alpha, \gamma})}\}$.

% \begin{equation}
%     \hat{\mathcal{H}}_0 = \{j \in [K] : p_{j} \leq p_{(k)}, k = \max_{j\in\{1,\dots,K\}}\big\{p_{(j)}\leq \frac{j}{K}q^*\big\}\}.
% \end{equation}

A direct approach was introduced in \cite{Storey2002:Direct} by considering a liberal estimate of the \gls{fdr}
\begin{equation}\label{eq:FDR_est}
    \widehat{{\rm FDR}} = {\rm min}\Big\{1, \frac{\delta\sum_{j=1}^K \mathbbm{1}[p_j > \gamma]}{(1-\gamma) \sum_{j=1}^K \mathbbm{1}[p_j \leq \delta]}\Big\},
\end{equation}
where $\delta \in [0, 1]$ defines the rejection region $[0, \delta]$.
Here, one can set $\delta = p_{(\kappa)}$ for a chosen $\kappa$, yielding a rejection set $\hat{\mathcal{H}}_1 = \{j \in [K] : p_{j} \leq p_{(\kappa)}\}$, and then estimate the corresponding \gls{fdr} for this $\kappa$.

\subsection{Conformal outlier detection}\label{subsec:conformal_outlier_detection}
In the conformal outlier detection setting, a number of test points $Z_{n+1}, \dots, Z_{n+m}$, as well as a null sample $Z_1, \dots, Z_n$ are observed. It is assumed that the null sample datapoints are exchangeable and from an unknown distribution $P_0$. Then, a real-valued conformal score function, $\hat{s}$,
% $\hat{s} : \mathcal{Z}\times\mathcal{Z}^{\ell}\times\mathcal{Z}^{n+m-\ell}\to[0, \infty)$
is defined satisfying the following permutation invariance property
\begin{equation*}
    \hat{s}(\cdot, (Z_1, \dots, Z_\ell), (Z_{\sigma(\ell+1)}, \dots, Z_{\sigma(n+m)})) = \hat{s}(\cdot, (Z_1, \dots, Z_\ell), (Z_{\ell+1}, \dots, Z_{n+m})),
\end{equation*}
for any permutation $\sigma$ of $\{\ell+1,\dots,n+m\}$, and $0 \leq \ell < n$ \citep{Marandon2024:Adaptive}. The interpretation here is that a large value of $\hat{s}_{n+i} = \hat{s}(Z_{n+i}, (Z_1, \dots, Z_\ell), (Z_{\ell+1}, \dots, Z_{n+m}))$, $i=1,\dots,m$, is evidence supporting the hypothesis that $Z_{n+i} \sim P_0$, and vice versa.
% \pp{PP: Perhaps an example here would help.} \mv{MV: We can mention OCSVM and give a reference?}
We will also assume the following property: $(Z_1,\dots,Z_n,(Z_{n+i})_{i\in\bar{\mathcal{H}}_0})$ are exchangeable conditional on $(Z_{n+i})_{i\in\bar{\mathcal{H}}_1}$, where the set of nulls in the test sample is denoted $\bar{\mathcal{H}}_0 = \{i\in\{1,\dots,m\}\;\mathrm{s.t.}\;\bar{H}_0^{i}\;\mathrm{is}\;\mathrm{true}\}$ with $\bar{H}_0^{i} : Z_{n+i} \sim P_0$, and $\bar{\mathcal{H}}_1 = \{1,\dots,m\}\setminus\bar{\mathcal{H}}_0$.
For $i=1,\ldots, m$, the conformal p-value associated to the test point $Z_{n+i}$ is
\begin{equation}\label{eq:conformal_pvalue}
    \hat{p}_i = \frac{1}{n-\ell+1} \Big(1 + \sum_{j=\ell+1}^{n} \mathbbm{1}[\hat{s}_j \leq \hat{s}_{n+i}]\Big).
\end{equation}
Assuming that the conformal scores are continuously distributed and that $Z_{n+i}$ is an inlier independent of $Z_1,\dots,Z_\ell$, then $\hat{p}_i$ is uniformly distributed on
$[n-\ell+1]/(n-\ell+1)$
% $\{1/(n-\ell+1), 2/(n-\ell+1), \dots, 1\}$
, and one has that $\lfloor \beta(n-\ell+1)\rfloor/(n-\ell+1) \leq \mathbb{P}(\hat{p}_i \leq \beta) \leq \beta,$
% \begin{equation*}
%     \frac{\lfloor \alpha(n-\ell+1)\rfloor}{n-\ell+1} \leq \mathbb{P}(\hat{p}_i \leq \alpha) \leq \alpha,
% \end{equation*}
for $\beta \in (0, 1)$. This shows that $\hat{p}_i$ is a \textit{marginally} valid p-value with almost exact control of the false rejection probability \citep{Marandon2024:Adaptive}.

% To achieve multiplicity control, it was shown by \cite{Bates2023:Testing} that conformal p-values are \gls{prds} and so \gls{fdr} control can be realized with the \gls{bh} procedure. Beyond that, with conformal p-values a broad class of adaptive \gls{bh} procedures still control the \gls{fdr} \citep{Bates2023:Testing,Marandon2024:Adaptive}.

% Specifically, with the assumptions made insofar, for any values of $\ell$, $n$, and $m$, the Benjamini-Hochberg procedure at level $q^*$ with the conformal p-values satisfies
% \begin{equation}\label{eq:Marandon_FDR_bounds}
%     m_0 \frac{\lfloor q^* (n-\ell+1) / m \rfloor}{n-\ell+1} \leq {\rm FDR} \leq \frac{m_0}{m} q^*,
% \end{equation}
% where $m_0 = |\bar{\mathcal{H}}_0|$ and ${\rm FDR}$ is the false discovery rate. In particular, ${\rm FDR} = \frac{m_0}{m} q^*$ when $q^* (n-\ell+1)/m$ is an integer. Additionally, they proved that a wide class of adaptive procedure maintains \gls{fdr} control, e.g., the Storey \gls{bh} procedure.

% \cite{Bates2023:Testing} also showed that the conformal p-values are positively correlated and has the property of \gls{prds}. More than that, they showed how to use test statistics of the form $\sum_{i=1}^m G(\hat{p}_i)$, for a general class of functions $G$, to make an asymptotically valid test of the global null that all the test data points are from $P_0$.

\section{Main results}\label{sec:results}
In the first round of data sharing as in Figure~\ref{fig:protocol_figure}, the data from the $k$-th agent yields conformal p-values $\hat{p}_{1}^{k},\dots,\hat{p}_{m}^{k}$. In the following, we consider a single sequence of conformal p-values, and so to simplify notations we denote this sequence by $\hat{p}_1,\dots,\hat{p}_m$. We are interested in testing the null hypothesis $H_0~:~\pi \leq \pi_{\rm th}$ using this sequence of conformal p-values.
% In the following, we introduce different test statistics based on combining the sequence of conformal p-values, and for each of the test statistics, we present a valid p-value for testing $H_0$.
We now state one of our main results, defining valid p-values for two natural test statistics.

\begin{theorem}\label{thm:nonasymptotic_tests}
    Let $H_0 : \pi\leq\pi_{\rm th}$, let $\hat{p}_i$ for $i=1,\dots,m$ denote the conformal p-values as introduced in Section~\ref{subsec:conformal_outlier_detection}, and denote by ${\rm B}_{\pi_{\rm th}}^{m}(k)$ the probability mass function of a binomial distribution evaluated at $k$ with parameters $\pi_{\rm th}$ and $m$. Denote by $F_{\rm NHG}(x; n+k, n, k-r)$ the \gls{cdf} of the negative hypergeometric distribution evaluated at $x \in \{0, 1, \dots, n\}$ with population size $n+k$, number of success states $n$, and number of failures $k-r$.
    % Let $H_0 : \pi\leq\pi_{\rm th}$, denote by $F_{\rm NHG}(x; n+k, n, k-r)$ the cumulative distribution function of the negative hypergeometric distribution evaluated at $x \in \{0, 1, \dots, n\}$ with population size $n+k$, number of success states $n$, and number of failures $k-r$, and by ${\rm B}(k, \pi_{\rm th})$ the probability mass function of a Binomial distribution evaluated at $k$ with parameter $\pi_{\rm th}$.
    % Moreover, let $\hat{p}_i$ for $i=1,\dots,m$ denote the conformal p-values as introduced in Section~\ref{subsec:conformal_outlier_detection}.
    The following are valid p-values for $H_0$:
    \begin{equation}\label{eq:storey_p_value}
        \hat{u}^{\rm storey} = \sum_{k = T^{\rm storey}+1}^m {\rm B}_{\pi_{\rm th}}^{m}(k) F_{\rm NHG}\big(\lfloor \lambda (n+1)\rfloor - 1; n+k, n, k-T^{\rm storey}\big) + \sum_{k = 0}^{T^{\rm storey}} {\rm B}_{\pi_{\rm th}}^{m}(k),
    \end{equation}
    where $T^{\rm storey} = \sum_{i=1}^m \mathbbm{1}[\hat{p}_i > \lambda]$ and $\lambda \in [n]/(n+1)$ is a hyperparameter;
    \begin{equation}\label{eq:quantile_p_value}
        \hat{u}^{\rm quantile} = \sum_{k = i_0+1}^m {\rm B}_{\pi_{\rm th}}^{m}(k) F_{\rm NHG}\big(T^{\rm quantile} - 1; n+k, n, k-i_0\big) + \sum_{k = 0}^{i_0} {\rm B}_{\pi_{\rm th}}^{m}(k),
    \end{equation}
    where $T^{\rm quantile} = (n+1) \hat{p}_{(m-i_0)}$ with $\hat{p}_{(m-i_0)}$ denoting the $(m-i_0)$-th smallest conformal p-value, and $i_0 \in [m-1]_0$ is a hyperparameter.
\end{theorem}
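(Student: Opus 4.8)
The plan is to treat both p-values through a single combinatorial bound on one statistic. For an integer threshold index $c\in\{1,\dots,n+1\}$ put
\[
N_c \;:=\; \#\{\, i\in[m] \ :\ \hat s_{n+i} > \hat s^{\rm cal}_{(c)} \,\},
\]
where $\hat s^{\rm cal}_{(1)}<\dots<\hat s^{\rm cal}_{(n)}$ are the ordered calibration scores (all $n+m$ scores a.s.\ distinct by continuity) and $\hat s^{\rm cal}_{(n+1)}:=+\infty$. Unwinding \eqref{eq:conformal_pvalue} with $c=\lfloor\lambda(n+1)\rfloor\in\{1,\dots,n\}$ gives $\hat p_i>\lambda\iff\hat s_{n+i}>\hat s^{\rm cal}_{(c)}$, hence $T^{\rm storey}=N_{\lfloor\lambda(n+1)\rfloor}$. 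Likewise, $\hat p_{(m-i_0)}\le \tau/(n+1)$ says that at least $m-i_0$ of the $\hat p_i$ are $\le\tau/(n+1)$, i.e.\ at most $i_0$ satisfy $\hat s_{n+i}>\hat s^{\rm cal}_{(\tau)}$; so $\{T^{\rm quantile}\le\tau\}=\{N_\tau\le i_0\}$ for every $\tau\in\{1,\dots,n+1\}$. Thus it suffices to control, uniformly over $H_0$, the lower tail of $N_c$.

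\emph{Core lemma.} I claim that for any distribution with contamination $\pi\le\pi_{\rm th}$ and any integers $c\in\{1,\dots,n+1\}$, $t\ge0$,
\[
\mathbb{P}(N_c\le t)\ \le\ \sum_{k=t+1}^{m}{\rm B}^m_{\pi_{\rm th}}(k)\,F_{\rm NHG}\!\big(c-1;\,n+k,\,n,\,k-t\big)\;+\;\sum_{k=0}^{t}{\rm B}^m_{\pi_{\rm th}}(k)\ =:\ G_{c,t},
\]
where ${\rm B}^m_{\pi_{\rm th}}(k)$ is the probability of exactly $k$ inliers among the $m$ test points. Substituting $(c,t)=(\lfloor\lambda(n+1)\rfloor,T^{\rm storey})$ gives $\hat u^{\rm storey}=G_{c,t}$ and $(c,t)=(T^{\rm quantile},i_0)$ gives $\hat u^{\rm quantile}=G_{c,t}$. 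To prove the lemma I would: (i) split $N_c=N_c^{\rm in}+N_c^{\rm out}$ over inlier/outlier test indices; since $N_c^{\rm out}\ge0$, $\mathbb{P}(N_c\le t)\le\mathbb{P}(N_c^{\rm in}\le t)$. (ii) Condition on the outlier test points and on which $k$ of the $m$ test points are inliers; by the exchangeability assumption of Section~\ref{subsec:conformal_outlier_detection} together with permutation invariance of $\hat s$, the vector of the $n$ calibration scores and the $k$ inlier test scores is exchangeable, so their relative order is a uniformly random arrangement of $n$ "calibration" and $k$ "inlier" symbols. (iii) For $k>t$, $N_c^{\rm in}\le t$ iff at least $k-t$ of the $k$ inlier test scores lie below $\hat s^{\rm cal}_{(c)}$, equivalently iff, reading the $n+k$ scores in increasing order as a draw without replacement from an urn of $n$ "success" (calibration) and $k$ "failure" (inlier) balls stopped at the $(k-t)$-th failure, at most $c-1$ successes have been drawn; this is exactly a negative hypergeometric event, so $\mathbb{P}(N_c^{\rm in}\le t\mid k)=F_{\rm NHG}(c-1;n+k,n,k-t)$, and $=1$ for $k\le t$ — a quantity not depending on the conditioning, hence surviving integration. (iv) The number of inliers is $\mathrm{Bin}(m,1-\pi)$; writing $\psi(k)$ for the quantity in (iii), $\psi$ is non-increasing in $k$ (adding one inlier test point can only increase $N_c^{\rm in}$ stochastically — a one-step insertion coupling on the uniform arrangement makes this precise), while $\mathrm{Bin}(m,1-\pi)$ is stochastically decreasing in $\pi$, so $\pi\mapsto\mathbb{E}[\psi(\mathrm{Bin}(m,1-\pi))]$ is non-decreasing and attains its $H_0$-maximum at $\pi=\pi_{\rm th}$, which is $G_{c,t}$.

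\emph{From the lemma to validity.} For the Storey statistic, $\hat u^{\rm storey}=g(T^{\rm storey})$ with $g(t):=G_{\lfloor\lambda(n+1)\rfloor,\,t}$; $g$ is non-decreasing in $t$ (a ${\rm B}^m_{\pi_{\rm th}}$-mixture of the CDFs $t\mapsto F_{\rm NHG}(c-1;n+k,n,k-t)$), and the lemma gives $g(t)\ge\sup_{H_0}\mathbb{P}(T^{\rm storey}\le t)$. Hence for any null $P$ and any $\alpha$, with $t_\alpha:=\max\{t:g(t)\le\alpha\}$,
\[
\mathbb{P}_P\big(\hat u^{\rm storey}\le\alpha\big)=\mathbb{P}_P\big(T^{\rm storey}\le t_\alpha\big)\le g(t_\alpha)\le\alpha .
\]
The same argument applies verbatim to $\hat u^{\rm quantile}=G_{T^{\rm quantile},\,i_0}$, using that $\tau\mapsto G_{\tau,i_0}$ is non-decreasing and $\{T^{\rm quantile}\le\tau\}=\{N_\tau\le i_0\}$.

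\emph{Main obstacle.} The delicate step is (iii): pinning down the negative hypergeometric representation of $N_c^{\rm in}$ with exactly the parameters $(n+k,n,k-t)$ and argument $c-1$, and handling the degenerate boundary $t\ge k$ consistently; the reduction $N_c\ge N_c^{\rm in}$, the monotonicity in $k$ and in $\pi$, and the monotone-envelope argument for validity are then routine. A small but essential point is that, since $\hat s$ may depend on the entire data cloud, exchangeability may only be invoked for the calibration-plus-inlier block — which is precisely why one first discards the (nonnegative) outlier contribution before conditioning.
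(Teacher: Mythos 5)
Your proposal is correct and follows essentially the same route as the paper's proof (Theorem~\ref{thm:upper_bound} and Corollary~\ref{thm:upper_bound_quant} in the supplement): condition on the number of inliers, discard the nonnegative outlier contribution, identify the conditional law of the relevant order statistic as negative hypergeometric with parameters $(n+k,n,k-t)$, and take the supremum of the binomial mixture at $\pi=\pi_{\rm th}$. The only differences are presentational --- you unify both statistics through the single count $N_c$ where the paper treats the quantile case as a corollary of the Storey case via the same duality, and you spell out the monotone-inversion step from the tail bound to $\mathbb{P}_{H_0}(\hat{u}\leq\alpha)\leq\alpha$, which the paper leaves as ``by construction.''
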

\begin{proof}
    The proof is given in Section~\ref{sec:CCTests} of the supplementary material.
\end{proof}
\begin{remark}
    The strategy of the proof is to use the law of total probability to decompose the rejection probability under the null into a sum over the number of inliers. Subsequently, an inequality is made by discarding contributions from the conformal p-values of outliers. Finally, using the marginal distribution of the order statistics of the conformal p-values, which specifically is a negative hypergeometric distribution, see \cite{Gazin2024:Transductive} and \cite{Biscio2025:Conformal}, allows for an explicit expression for the rejection probability under the null, which in turn defines a p-value.

    The test statistic $T^{\rm storey}$ is motivated by the Storey estimator of $\pi$ by \cite{Storey2002:Direct}, while $T^{\rm quantile}$ is motivated by the quantile estimator of $\pi$ by \cite{Benjamini2006:Adaptive}.
\end{remark}

More general test statistics are also allowed, however, constructing valid p-values requires knowledge of the distribution of the test statistic under the null. In the following result, we present two other test statistics and their associated p-values which are asymptotically valid.
\begin{theorem}\label{thm:asymptotic_fisher_test}
    Let $H_0 : \pi\leq\pi_{\rm th}$, let $\hat{p}_i$ for $i=1,\dots,m$ denote the conformal p-values as introduced in Section~\ref{subsec:conformal_outlier_detection}, and denote by ${\rm B}_{\pi_{\rm th}}^{m}(k)$ the probability mass function of a binomial distribution evaluated at $k$ with parameters $\pi_{\rm th}$ and $m$. Then, asymptotically as $n\to\infty$ the following are valid p-values for $H_0$:
    \begin{equation}\label{eq:fisher_p_value}
        \hat{u}^{\rm fisher} = \pi_{\rm th}^{m} + \sum_{k = 1}^m {\rm B}_{\pi_{\rm th}}^{m}(k) F_{\chi^2_{2k}}\bigg(\frac{-T^{\rm fisher}-2k{\rm log}\big(\frac{1}{n+1}\big) + 2k(\sqrt{1+k/n} - 1)}{\sqrt{1+k/n}}\bigg),
    \end{equation}
    where $T^{\rm fisher} = -2\sum_{i=1}^m \Big({\rm log}\big(\frac{1}{n+1}\big) - {\rm log}(\hat{p}_i)\Big)$, $F_{\chi^2_{2k}}$ is the \gls{cdf} of the chi-square distribution with $2k$ degrees of freedom;
    \begin{equation}\label{eq:Sum_p_value}
        \hat{u}^{\rm sum} = \pi_{\rm th}^{m} + \sum_{k = 1}^m {\rm B}_{\pi_{\rm th}}^{m}(k) F_{{\rm IH}_k}\bigg(\frac{T^{\rm sum} + k(\sqrt{1+k/n} - 1)/2}{\sqrt{1+k/n}}\bigg),
    \end{equation}
    where $T^{\rm sum} = \sum_{i=1}^m \hat{p}_i$, $F_{{\rm IH}_k}$ is the \gls{cdf} of the Irwin-Hall distribution with parameter $k$.
\end{theorem}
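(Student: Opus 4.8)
The plan is to mirror the proof of Theorem~\ref{thm:nonasymptotic_tests}, but to replace the exact finite-sample law of the inlier statistic (which there is a negative hypergeometric order statistic) by an asymptotic law, which is precisely why the resulting validity is only asymptotic. I would first condition on the (random) set of inliers $\bar{\mathcal{H}}_0$ among the $m$ test points, say with $|\bar{\mathcal{H}}_0|=k$; by the exchangeability property of Section~\ref{subsec:conformal_outlier_detection} the $k$ inlier conformal p-values are then exchangeable with the $n$ calibration scores, so their joint law depends only on $(k,n)$. Since each summand of $T^{\rm fisher}$, resp.\ $T^{\rm sum}$, lies in a deterministic range ($\log((n+1)\hat p_i)\in[0,\log(n+1)]$ because $\hat p_i\in[n+1]/(n+1)$, resp.\ $\hat p_i\in(0,1]$), discarding the nonnegative contributions of the outlier p-values bounds the statistic from below by its inlier-only version $T^{\rm fisher}_{\mathrm{in}}=2\sum_{i\in\bar{\mathcal{H}}_0}\log((n+1)\hat p_i)$, resp.\ $T^{\rm sum}_{\mathrm{in}}=\sum_{i\in\bar{\mathcal{H}}_0}\hat p_i$; rejecting for \emph{small} values of the statistic, this inequality bounds the rejection probability from above by one computed from the inliers alone, uniformly over all outlier distributions. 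Applying the law of total probability over $k$ and observing that each inlier statistic is monotone increasing in $k$ — it is a sum of $k$ nonnegative terms — hence stochastically decreasing in the contamination level, the rejection probability is monotone in $\pi$, so the least favorable $\pi$ under $H_0$ is $\pi=\pi_{\rm th}$, for which the number of inliers is $\mathrm{Binomial}(m,1-\pi_{\rm th})$; this produces the binomial weights, with the zero-inlier event (probability $\pi_{\rm th}^m$) contributing the explicit $\pi_{\rm th}^m$ term after bounding its conditional rejection probability trivially by $1$.

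The new ingredient is the conditional law of the inlier statistic given $k$ inliers. Unlike the order statistics used in Theorem~\ref{thm:nonasymptotic_tests}, the functionals $\sum_{i\in\bar{\mathcal H}_0}\log((n+1)\hat p_i)$ and $\sum_{i\in\bar{\mathcal H}_0}\hat p_i$ have no tractable distribution at finite $n$, so I would pass to the limit $n\to\infty$. Writing the continuous scores through the probability integral transform, each inlier conformal p-value equals $\bigl(1+\mathrm{Bin}(n,V_i)\bigr)/(n+1)$ for i.i.d.\ $V_i\sim\mathrm{Uniform}(0,1)$, so $(\hat p_i)_{i\in\bar{\mathcal H}_0}\to(V_i)_{i\in\bar{\mathcal H}_0}$ almost surely (the finite-$n$ vector is positively dependent, but the limit is i.i.d.). By the continuous mapping theorem $-2\sum_{i\in\bar{\mathcal H}_0}\log\hat p_i\Rightarrow\chi^2_{2k}$, i.e.\ $2k\log(n+1)-T^{\rm fisher}_{\mathrm{in}}\Rightarrow\chi^2_{2k}$, so $\mathbb{P}(T^{\rm fisher}_{\mathrm{in}}\le t\mid k)=\mathbb{P}\bigl(-2\sum_{i\in\bar{\mathcal H}_0}\log\hat p_i\ge 2k\log(n+1)-t\bigr)$ converges to the $\chi^2_{2k}$-probability appearing in \eqref{eq:fisher_p_value}; likewise $T^{\rm sum}_{\mathrm{in}}\Rightarrow\mathrm{IH}_k$, giving the $F_{{\rm IH}_k}$ term of \eqref{eq:Sum_p_value}, and since the limit c.d.f.s are continuous for $k\ge1$ the c.d.f.\ convergence is uniform, which legitimizes evaluating them at the random observed value. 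The factors $\sqrt{1+k/n}$ and the shifts $2k(\sqrt{1+k/n}-1)$, $k(\sqrt{1+k/n}-1)/2$ are a moment-matching correction: because the finite-$n$ conformal p-values are ranks in a common calibration sample they are positively correlated at order $1/n$ (an exact second-moment computation via the $\mathrm{Dirichlet}(1,\dots,1)$ bin-mass representation gives $\mathrm{Var}(T^{\rm sum}_{\mathrm{in}})=\tfrac{k}{12}(1+\tfrac{k}{n})+o(\cdot)$ and $\mathrm{Var}(2k\log(n+1)-T^{\rm fisher}_{\mathrm{in}})=4k(1+\tfrac{k}{n})+o(\cdot)$); standardizing the finite-$n$ functional to the first two moments of $\chi^2_{2k}$, resp.\ $\mathrm{IH}_k$, yields exactly the affine maps of $T^{\rm fisher}$, $T^{\rm sum}$ displayed in the statement, while the residual $o(1)$ bias does not affect the limit. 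Feeding these conditional probabilities into the total-probability decomposition and taking the worst case $\pi=\pi_{\rm th}$ gives $\hat u^{\rm fisher}$ and $\hat u^{\rm sum}$ and establishes $\limsup_{n\to\infty}\mathbb{P}_{H_0}(\hat u\le\alpha)\le\alpha$.

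The main obstacle is this second step — the asymptotic law of the inlier functional together with the precise form of the corrections. For $T^{\rm sum}$ it reduces to a CLT-type argument for a linear statistic of an exchangeable sampling scheme, the only delicate point being to pin the variance-inflation constant to $1+k/n$ rather than merely $1+O(k/n)$. For $T^{\rm fisher}$ the functional is nonlinear, so one must additionally control $\log\hat p_i$ near its smallest value $1/(n+1)$ — it is bounded by $\log(n+1)$, so there is no blow-up, but one must check that the boundary grid point does not perturb the $\chi^2_{2k}$ limit — and carry out a second-order expansion to isolate the $O(k/n)$ term; a smaller $O((\log n)/n)$ bias in the mean of $2k\log(n+1)-T^{\rm fisher}_{\mathrm{in}}$ is not captured by the stated correction but is asymptotically negligible. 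The remaining pieces — discarding outlier contributions, the stochastic-monotonicity reduction to $\pi=\pi_{\rm th}$, and the trivial bound on the zero-inlier term — are exactly as in the proof of Theorem~\ref{thm:nonasymptotic_tests}.
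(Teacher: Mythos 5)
Your outer skeleton is exactly the paper's: condition on the number of inliers $k$, drop the nonnegative outlier contributions of $G(\hat p_i)$ to upper-bound the rejection probability by its inlier-only version, argue that the conditional rejection probability is decreasing in $k$ so the supremum over $\pi_0\in[0,\pi_{\rm th}]$ is attained at $\pi_{\rm th}$, and bound the zero-inlier term by $1$ to produce the explicit $\pi_{\rm th}^m$. This is Eq.~\eqref{eq:fisher_ineq} in the proof of Theorem~\ref{thm:appendix_asymp_test}. The divergence — and the genuine gap — is in how the conditional law of the inlier statistic given $k$ inliers is obtained. The paper does not derive it: it invokes Theorem S1 of \cite{Bates2023:Testing}, a CLT-type result for $\sum_{i=1}^{k}G(\hat p_i)$ in the proportional regime $k=\lfloor\gamma_k n\rfloor$, and it is precisely that theorem which produces the variance inflation $\sqrt{1+\gamma_k}$ and the mean shift $k(\sqrt{1+\gamma_k}-1)\int_0^1 G(u)\,{\rm d}u$ as genuine first-order effects of the dependence among conformal p-values built from a shared calibration set. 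The regularity conditions (i)--(iii) on $G$, which you never address, are the hypotheses of that theorem; the paper verifies them for $G(\hat p)=\hat p$ and for the shifted Fisher $G$ in separate remarks.

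Your substitute derivation conflates two regimes. For fixed $k$ as $n\to\infty$, your continuous-mapping argument ($\hat p_i\to V_i$ i.i.d.\ uniform, hence $\chi^2_{2k}$ and Irwin--Hall limits) is sound, but then $k/n\to0$ and the correction factors you are trying to justify are asymptotically vacuous — you would only be proving the uncorrected statement. In the regime where the corrections have content, namely $k\asymp n$ (which carries essentially all the binomial mass since the theorem assumes $m=\lfloor\gamma n\rfloor$ and $k$ concentrates near $(1-\pi_{\rm th})m$), the inlier p-values do not de-correlate fast enough for the i.i.d.\ limit of the sum to hold: the $O(1/n)$ pairwise covariances summed over $k^2\asymp n^2$ pairs contribute at the same order as the i.i.d.\ variance, which is exactly the origin of $1+\gamma_k$. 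Matching the first two moments, as you propose, does not by itself yield a distributional limit for a sum over a dependent triangular array; one still needs the CLT of \cite{Bates2023:Testing} (or an equivalent combinatorial/Hájek-projection argument) to close this step. You correctly flag this as ``the main obstacle,'' but the proof is not complete without it, whereas the paper closes it in one line by citation and additionally uses the monotonicity of the limiting quantities $q_k^*$ in $k$ to pass to the limit inside the sum over $k$ — a point your write-up does not address.
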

\begin{proof}
    The proof is found in Section~\ref{sec:CCTests} of the supplementary material.
\end{proof}
\begin{remark}
    The strategy of the proof is similar as that of Theorem~\ref{thm:nonasymptotic_tests}, however, since the distribution of $T^{\rm fisher}$ or $T^{\rm sum}$ where $\hat{p}_i$ are all inliers is not explicitly known, we employ the asymptotic result Theorem S1 of \cite{Bates2023:Testing}. The general theorem is actually more general than presented here, as one can consider a general class of test statistics of the form $\sum_{i=1}^m G(\hat{p}_i)$ requiring that $G(U)$ has finite moments for $U \sim {\rm Unif}([0, 1])$ \citep{Bates2023:Testing}. The general statement with precise conditions required on $G$ is given in the supplementary material.

    The test statistic $T^{\rm fisher}$ is motivated by Fisher's combination test \citep{Fisher1925:Statistical}, and $T^{\rm sum}$ is another classical combination test statistic \citep{Vovk2020:Combining}.
\end{remark}

With the p-values of Eqs.~\eqref{eq:storey_p_value}, \eqref{eq:quantile_p_value}, \eqref{eq:fisher_p_value}, and \eqref{eq:Sum_p_value}, tests for $H_0$ can be conducted without requiring any assumptions on the null distribution, $P_0$, and outlier distribution, $P_1$.
% \mv{To have a powerful conformal data contamination test, it is important that the conformal score is able to separate well the inliers from the outliers. When this is the case, outliers will tend to yield small conformal p-values, and as seen in the details in supplementary material, the conformal data contamination tests can yield exact control when the conformal p-values are small (and $\pi=\pi_{\rm th}$). The parameter of importance to have a good conformal score is $\ell$. At the same time, increasing $n-\ell$ allows a better approximation of the distribution, and so also improves the power of the conformal data contamination test. Hence, there is a trade-off to consider when selecting $\ell$. Finally, having a large $m$ in general also improves the power of the conformal data contamination tests as more evidence against the null can be aggregated. This is opposite to the case of conformal outlier detection for which \cite{Mary2022:Semi} studied how larger $m$ results in a loss of power.}
Effective conformal data contamination testing relies on conformal scores that clearly separate inliers from outliers, leading to small conformal p-values for outliers and enabling exact control when $\pi = \pi_{\rm th}$, as seen in the details in the supplementary material. The key parameter is $\ell$, which balances test power: smaller $\ell$ improves score quality, while larger $n-\ell$ enhances distribution approximation. Finally, having a large $m$ in general also improves the power of the conformal data contamination tests as more evidence against the null can be aggregated. This is opposite to the case of conformal outlier detection for which \cite{Mary2022:Semi} studied how larger $m$ results in a loss of power.
% As a shorthand, we denote in the following by $\hat{p}^{\rm con}$ any of such p-values and refer to them as conformal data contamination p-values. 
% Next we consider a scenario where $H_0$ is tested for multiple test data sets, thereby requiring a multiplicity correction.

% \subsection{Multiple testing with conformal data contamination p-values}
In case multiple test sets are observed, we have multiple sets of conformal p-values, each of which gives a conformal data contamination p-value. We consider here how to do multiple testing with this sequence of conformal data contamination p-values while proving \gls{fdr} control guarantees. Motivated by the result of \cite{Bates2023:Testing} that conformal p-values are \gls{prds}, we have shown that the Storey conformal data contamination p-values are indeed also \gls{prds}. Hence, we can simply apply the \gls{bh} procedure on the conformal data contamination p-values to guarantee \gls{fdr} control.
\begin{theorem}\label{thm:Storey_PRDS}
    Let $\hat{u}_1^{\rm storey}, \dots, \hat{u}_K^{\rm storey}$ be $K$ Storey conformal data contamination p-values as in \eqref{eq:storey_p_value}, with the $k$-th derived from conformal p-values $\hat{p}_{k, 1}, \dots, \hat{p}_{k, m}$ assuming independence of the $K$ test sets $\mathcal{D}_k = \{(X_i^k, Y_i^k)\}_{i=1}^{m}$. Then, the Storey conformal data contamination p-values are \gls{prds}.
\end{theorem}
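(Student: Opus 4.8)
The plan is to strip $\hat{u}_k^{\rm storey}$ down to a monotone function of a one-dimensional statistic, to isolate the only feature of the shared sample $\mathcal{D}_0$ on which the whole vector depends, and then to run a monotone-coupling argument in the spirit of the \gls{prds} proof for conformal $p$-values in \cite{Bates2023:Testing}.

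First I would show that \eqref{eq:storey_p_value} has the form $\hat{u}_k^{\rm storey}=g(T^{\rm storey}_k)$ for a single nondecreasing function $g$ depending only on $(n,m,\pi_{\rm th},\lambda)$. This is a direct monotonicity check on \eqref{eq:storey_p_value}: $F_{\rm NHG}(x;n+k,n,r)$ is nonincreasing in the number of required failures $r$ (fewer failures yields a stochastically smaller stopping value), and moving an index from the first sum into the second sum when $T^{\rm storey}_k$ increases by one contributes the nonnegative amount ${\rm B}^m_{\pi_{\rm th}}(T^{\rm storey}_k+1)(1-F_{\rm NHG})$. Second, because $\lambda\in[n]/(n+1)$, the event $\{\hat{p}_{k,i}>\lambda\}$ coincides with $\{\hat{s}^k_{n+i}\ge C\}$ for a fixed order statistic $C$ of the calibration conformal scores; hence $T^{\rm storey}_k=\sum_{i=1}^{m}\mathbbm{1}[\hat{s}^k_{n+i}\ge C]$, and $(\hat{u}_1^{\rm storey},\dots,\hat{u}_K^{\rm storey})$ depends on $\mathcal{D}_0$ only through the scalar $C$. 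For a fixed conformal score, or one depending on $\mathcal{D}_0$ only through $Z_1,\dots,Z_\ell$, this $C$ is literally common to all $K$ agents; for a fully data-dependent score one first conditions on $Z_1,\dots,Z_\ell$ and then on the relevant unordered multisets of scores, invoking the permutation-invariance property exactly as in \cite{Bates2023:Testing,Marandon2024:Adaptive}, after which the same reduction holds. Since the test sets are independent and $C$ is a function of $\mathcal{D}_0$ alone, conditionally on $C$ the $\hat{u}_k^{\rm storey}$ are mutually independent, and each is a pointwise nonincreasing function of $C$ for every realization of $\mathcal{D}_k$.

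To verify \gls{prds}, fix an index $k$ and an increasing set $D\subseteq\mathbb{R}^K$. I would construct, on one probability space, versions of the conditional law of $(\hat{u}_1^{\rm storey},\dots,\hat{u}_K^{\rm storey})$ given $\hat{u}_k^{\rm storey}=u$ for all $u$ in the support, such that $u\mapsto(\hat{u}_l^{\rm storey})_{l}$ is coordinatewise nondecreasing almost surely; \gls{prds} is then immediate. The $k$-th coordinate equals the constant $u$, so it is nondecreasing trivially. For $l\ne k$, conditioning on $\hat{u}_k^{\rm storey}=u$ is the same as conditioning on $T^{\rm storey}_k$ lying in an interval $I_u$ of integers (the $I_u$ disjoint and increasing since $g$ is nondecreasing), and the crux is that $\mathcal{L}(C\mid T^{\rm storey}_k\in I_u)$ is stochastically nonincreasing as $u$ increases. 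Granting this, I can couple the conditional copies so that $C^{(u)}$ is a.s.\ nonincreasing in $u$; feeding in one copy of the remaining test data $(\mathcal{D}_l)_{l\ne k}$, which given $C$ is independent of $T^{\rm storey}_k$ and of $\mathcal{D}_k$, each $\hat{u}_l^{\rm storey}=g\big(\sum_{i}\mathbbm{1}[\hat{s}^l_{n+i}\ge C^{(u)}]\big)$ is a.s.\ nondecreasing in $u$, since $g$ is nondecreasing and $c\mapsto\sum_i\mathbbm{1}[\hat{s}^l_{n+i}\ge c]$ is nonincreasing.

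The main obstacle is the stochastic-monotonicity claim, which reduces to showing $\mathcal{L}(C\mid T^{\rm storey}_k=t)$ is stochastically nonincreasing in $t$. Conditionally on $C=c$, $T^{\rm storey}_k$ is Poisson--binomial with success probabilities $p_i(c)=\mathbb{P}(\hat{s}^k_{n+i}\ge c)$, each nonincreasing in $c$ as a survival function (this holds whether agent $k$'s points are inliers or outliers). It therefore suffices to show that $\mathbb{P}(T^{\rm storey}_k=t\mid C=c)$ is reverse-regular of order two in $(c,t)$ — equivalently, that $\{\mathcal{L}(T^{\rm storey}_k\mid C=c)\}_c$ has a monotone likelihood ratio in the reversed order of $c$ — since this gives the stochastic ordering of the posteriors $\mathcal{L}(C\mid T^{\rm storey}_k=t)$ in $t$, hence of the mixtures over $t\in I_u$. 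This is a classical total-positivity fact: each Bernoulli factor $\mathbb{P}(\mathbbm{1}[\hat{s}^k_{n+i}\ge c]=b\mid c)$ is $\mathrm{RR}_2$ in $(c,b)$ because $p_i(c)$ is monotone, the Bernoulli (and hence the Poisson--binomial) mass functions are log-concave, and a convolution of log-concave $\mathrm{RR}_2$ kernels in a common parameter is again $\mathrm{RR}_2$. The remaining points — the bookkeeping for a possibly non-injective $g$ and the reduction to a common $C$ for data-dependent conformal scores — are routine. Since $k$ was arbitrary and nullity of $H_0^k$ was never used, the argument in fact yields \gls{prds} on all of $[K]$, which is more than enough for \gls{bh} control over the set of true nulls.
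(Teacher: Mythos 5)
Your proposal is correct in its essentials, but it proves the theorem by a genuinely different route than the paper. The paper keeps the full vector of agent-$k$ conformal p-values as the conditioning variable: it notes (as you do) that \eqref{eq:storey_p_value} is entrywise nondecreasing in $(\hat p_{k,1},\dots,\hat p_{k,m})$, invokes Theorem 2.4 of \cite{Bates2023:Testing} to get $\mathbb{P}(X\in B\mid X_k=x)\ge\mathbb{P}(X\in B\mid X_k=x')$ for $x\succeq x'$, and then handles the conditioning on $\hat u_k^{\rm storey}=y$ by an explicit combinatorial matching between the level sets $S(y)$ and $S(y')$ of the Storey statistic, constructing dominating subsets $\bar S(y)$, $\bar S(y')$ case by case according to their cardinalities. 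You instead collapse the cross-agent dependence onto a single scalar: $T_k^{\rm storey}=\sum_i\mathbbm{1}[\hat s^k_{n+i}\ge C]$ for a calibration order statistic $C$ common to all agents, so that given (the trained score and) $C$ the $\hat u_l^{\rm storey}$ are independent and monotone in $C$, and the whole problem reduces to stochastic monotonicity of $\mathcal{L}(C\mid T_k^{\rm storey}=t)$ in $t$, which you obtain from the $\mathrm{RR}_2$/monotone-likelihood-ratio property of the (Poisson-)binomial kernel $\mathbb{P}(T_k^{\rm storey}=t\mid C=c)$ plus a monotone coupling. Your argument is more self-contained: it does not use the PRDS theorem for conformal p-values as a black box, and it sidesteps two delicate points in the paper's proof, namely the implicit uniformity of $X_k$ over $S(y)$ given $Y_k=y$ and the block-conditional monotonicity $\mathbb{P}(X\in B\mid X_k=x)\ge\mathbb{P}(X\in B\mid X_k=x')$, which does not literally follow from single-coordinate PRDS. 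The price is that your reduction is tied to statistics that are threshold-exceedance counts of the scores, whereas the paper's level-set argument is, at least in spirit, the template the authors hope to extend to the other conformal data contamination p-values.

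Two caveats on the steps you call routine. First, the Poisson-binomial structure and the conditional independence across agents hold only after conditioning on the trained score function (i.e., on $Z_1,\dots,Z_\ell$), and PRDS is not in general preserved under mixing over a conditioning variable; so what your argument delivers without further work is PRDS conditionally on the trained score. This suffices for BH/FDR control (and the paper's own invocation of \cite{Bates2023:Testing} carries the same implicit conditioning), but it should be stated. Second, the convolution/MLR step needs the within-agent test points to be independent draws from $\tilde P_k$; mere within-agent exchangeability would break the Poisson-binomial representation.
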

\begin{proof}
    The proof is found in Section~\ref{sec:CCTests} of the supplementary material.
\end{proof}
\begin{remark}
    Notice that Theorem \ref{thm:Storey_PRDS} only refers to the Storey conformal data contamination p-values, and as such we do not provide the same theoretical \gls{fdr} guarantees when using the other conformal data contamination p-values. However, we expect the result can be generalized to a broad class of conformal data contamination p-values, herein the ones presented in this paper, and we leave this as a line of future investigation.
\end{remark}

% \subsection{Analysis of complexity and hyperparameter influence}\label{subsec:analysis}
% \mv{Complexity, hyperparameters...}
% \mv{The complexity of the conformal contamination test is $\mathcal{O}(mC_1(\ell)C_2(n, m))$ where $C_1(\ell)$ is the complexity of fitting the conformal score and $C_2(n,m)$ is the complexity of evaluating the cumulative distribution functions arising in the conformal contamination test p-values. As an example, for the Storey conformal contamination test p-value, $C_2(n,m) = \mathcal{O}(n ~ {\rm max}(n,m) ~ {\rm log}({\rm max}(n,m)))$. For the Fisher and sum conformal contamination test p-values, $C_2(n,m)=C_2(m)$ does not depend on $n$. When doing multiple conformal contamination tests, the complexity is just $\mathcal{O}(KmC_1(\ell)C_2(n, m))$.}

% In Section~\ref{sec:interpretations} we provide an overview of the relevant scenario parameters and hyperparameters, and discuss their influence on the conformal contamination tests in isolation, as well as their influence in the proposed collaborative data sharing scheme. This is also accompanied by an ablation study in Section~\ref{sec:Gaussian_data}.

\section{Proposed data sharing procedure}\label{sec:procedure}
Taking the perspective of the $0$-th data agent, we consider a scenario where in the first round each of the $K$ other data agents send some datapoints $Z_i^k$ for $i\in[m]$ and $k\in[K]$ where $m$ is the number of datapoints acquired from each of the other data agents. We assume for simplicity that $m$ is the same for all data agents, however this assumption is not required for the proposed methodology.
We model data coming from the $k$-th data agent, $k\in[K]$, to the $0$-th data agent by the contamination model \citep{Blanchard2010:Semi}: $Z_{i}^{k} \sim (1-\pi_k)P_0 + \pi_k P_{k}$
% \begin{equation*}
%     Z_{i}^{k} \sim (1-\pi_k)P_0 + \pi_k P_{k}
% \end{equation*}
where $\pi_k \in [0, 1]$ is the contamination factor.\footnote{We make the assumption here that $P_{k}$ are proper novelty distributions with respect to $P_0$: there exists no decomposition of the form $P_{k} = (1-\zeta)Q + \zeta P_0$ where $Q$ is a probability distribution and $0 < \zeta \leq 1$, see \cite{Blanchard2010:Semi,Zhu2023:Mixture} for details.} \gls{ood} samples $Z_{i}^{k} \sim P_{k}$, also referred to as outliers, are not of interest for the $0$-th data agent as we focus on improving personalization.
% they (i) increase model fitting cost, (ii) decrease performance on in-distribution data, and (iii) reception of the data comes with a communication cost as well as a data price.
To avoid spending resources on acquiring outliers from other data agents, the $0$-th data agent attempts to determine the contamination factors
% in the data from each of the other data agents, i.e., the contamination factor $\pi_k$,
with the purpose to subsequently only collaborate with other data agents having a low contamination factor.
% In practice this can be realized using conformal outlier detection.
% Note that beyond this, no assumptions are made on the unknown distributions $P_0$ and $P_k$, $k=1,\dots,K$, and further $\pi_k$, $k=1,\dots,K$, are assumed to be unknown.}
% \pp{Comments on potentially other types of contamination models.} \mv{\cite{Zhu2023:Mixture}}

First, conformal p-values, $\hat{p}_i^k$, are computed as in \eqref{eq:conformal_pvalue}. Using these conformal p-values, we define non-contamination statistics, $T_k$, mapping a sequence of conformal p-values to a positive real number, for which a large value indicates a small contamination factor, and vice versa. The specific options in consideration were discussed in Section~\ref{sec:results} where we also presented p-values, denoted $\hat{u}_k$.

% a multitude of conservative estimators for $\pi_k$ exists, for instance the Storey estimator \cite{Storey2002:Direct} and the quantile estimator \cite{Benjamini2006:Adaptive}. Note that without distributional assumptions unbiased estimators of $\pi_k$ does not exist \cite{Blanchard2010:Semi}. Based on such estimators we may define non-contamination statistics, $T$, mapping a sequence of conformal p-values to a positive real number, for which a large value indicates a small contamination factor, and vice versa. An example motivated by Storey's estimator is $T^{\rm storey} = \sum_{i=1}^m \mathbbm{1}[\hat{p}_i > \lambda]$, for hyperparameter $\lambda \in [n]/(n+1)$. Other options are discussed in Section~\ref{sec:results} where we also present p-values for each of the considered contamination test statistics, $\hat{p}_k^{\rm contamination}$.

Given a collaboration budget for subsequent rounds, meaning that going forward we can at most acquire data from $K_{\rm budget}$ other data agents, we decide to collaborate with the $K_{\rm budget}$ data agents with largest non-contamination statistic, and denote this index set as $\hat{\mathcal{H}}_0 = \{\sigma(i) : i\in[K_{\rm budget}]\}$ where $\sigma$ is a permutation on $[K]$ such that $T_{\sigma(1)} \geq T_{\sigma(2)} \geq \cdots \geq T_{\sigma(K)}$. Using the conformal data contamination p-values, $\hat{u}_k$, we may estimate the \gls{fdr} for null hypotheses $H_0^k : \pi_k \leq \pi_{\rm th}$, $\pi_{\rm th} \in [0, 1)$, e.g., using Storey's direct approach \eqref{eq:FDR_est}, to gain insights into the decision. If we are not given a collaboration budget, but instead a threshold on how much contamination we tolerate, i.e., $\pi_{\rm th}$, we may consider testing null hypotheses $H_0^k : \pi_k \leq \pi_{\rm th}$. For a specified significance level, $\alpha \in (0, 1)$, this can be done with the conformal data contamination p-values using an adaptive \gls{bh} procedure, e.g., Storey's \gls{bh} procedure described in Section~\ref{subsec:multiple_testing}, yielding a set of non-rejected data agents, $\hat{\mathcal{H}}_0 = [K] \setminus {\rm SBH}_{\alpha, \gamma}(\hat{u}_1, \dots, \hat{u}_K)$ for Storey's hyperparameter $\gamma \in (0, 1)$.

After determining which data agents to collaborate with in the following round(s), the $0$-th data agent once again acquires data $Z_{m+i}^k$ for $i\in[m]$ and $k\in \hat{\mathcal{H}}_0$, and can then aggregate all the received data, followed by data subset selection for instance using techniques from data valuation \citep{Koh2017:Understanding,Ghorbani2019:Shapley,Yoon2020:Reinforcement}. The selected data is combined with the $n$ local datapoints, and the $0$-th data agent trains its local model.
% , denoted now by $Z_i^{\rm agg}$ for $i\in[M]$ where $M=mK+m|\hat{\mathcal{H}}_0|$. Note that we use the same $m$ here for the following round(s) for simplicity, but that this is not a requirement. To avoid using outliers for training the local model, the $0$-th data agent tests hypotheses $\bar{\mathcal{H}}_0^i : Z_i^{\rm agg} \sim P_0$, $i\in[M]$ at significance level $\beta \in (0, 1)$. This is done using conformal outlier detection with Storey's \gls{bh} procedure, giving non-rejection set $\hat{\bar{\mathcal{H}}}_0 = [M] \setminus {\rm SBH}_{\beta, \zeta}(\hat{p}_i^{\rm agg}, i\in[M])$ for Storey's hyperparameter $\zeta \in [0, 1)$.
% using an adaptive \gls{bh} procedure and thereby controlling the \gls{fdr}.
% Note that other approaches than conformal outlier detection could be taken in this part of the procedure, for instance more complex data valuation techniques may be considered \citep{Koh2017:Understanding,Ghorbani2019:Shapley,Yoon2020:Reinforcement}, however, we limit our attention to the conformal outlier detection for computational simplicity.
% Combining finally the $n$ local datapoints with the non-rejected received data in $\hat{\bar{\mathcal{H}}}_0$, the $0$-th agent trains its local model.
% The proposed procedure is visualized in Figure~\ref{fig:protocol_figure}.
The proposed procedure is 
% summarized in Procedure~\ref{alg:procedure} and
visualized in Figure~\ref{fig:protocol_figure} and also summarized in Section~\ref{sec:additional_details} of the supplementary material. An overview of the scenario variables and hyperparameters is given in Section~\ref{sec:interpretations}, followed by a discussion of limitations in Section~\ref{subsec:limitations} of the supplementary material.

\section{Numerical experiments and analysis}\label{sec:numerical_experiments}
We explore the power of the conformal data contamination tests with numerical experiments on the MNIST \citep{Lecun1998:Gradient} and FEMNIST datasets \citep{Caldas2019:LEAF}. First we introduce the data setup, and present the baseline approaches. Then, the conformal data contamination tests are analyzed numerically. Finally, the performance of the proposed data sharing procedure is evaluated on the classification tasks.

\subsection{Data setup and baselines}\label{subsec:data_setup}
The MNIST dataset consists of $70000$ images of handwritten digits, while the FEMNIST dataset consists of $805263$ images of handwritten digits, lowercase letters, and uppercase letters. Each image is $28\times28$ with each pixel taking values in $[255]_0$ and has an associated label $y \in [9]_0$ for MNIST and $y \in [61]_0$ for FEMNIST. We normalize the images to the interval $[0, 1]$ by dividing pixel-wise with $255$. 

% We add artificial noise to all the images: we sample a Gaussian noise vector with zero mean and covariance kernel $C(p_i, p_j) = \sigma^2 {\rm exp}(-\varrho\Vert p_i - p_j\Vert)$ for pixel positions $p_i = i$ and $p_j = j$ for the $i$-th row and $j$-column in the pixel image, and add it to the image. We set $\sigma = 0.3$ and $\varrho = 0.1$, except the \gls{ood} images in case of feature noise (see the following paragraph). After adding noise, each image is normalized to the interval $[0, 1]$.

% We add artificial noise to all the images, and consider two different ways of adding feature noise:
% \begin{enumerate}
%     \item \emph{Gaussian random field:} we sample a Gaussian noise vector with zero mean and covariance kernel $C(p_i, p_j) = \sigma^2 {\rm exp}(-\varrho\Vert p_i - p_j\Vert)$ for pixel positions $p_i = i$ and $p_j = j$ for the $i$-th row and $j$-column in the pixel image, and add it to the image.
%     \item \emph{Independent Bernoulli pixel randomization:} we sample \gls{iid} from a Bernoulli distribution with parameter ${\rm B}_{\rm p}$ on each pixel, and selected pixels get noisy value by sampling \gls{iid} from a standard uniform distribution.
% \end{enumerate}

We consider three different ways of simulating data contamination: (i) \emph{label noise:} contaminated datapoints have false labels; (ii) \emph{feature noise:} contaminated datapoints have noisy features; (iii) \emph{lower- and uppercase mixture:} contaminated datapoints are images of lowercase letters while in-distribution datapoints are uppercase letters.
We sample the feature noise as a Gaussian vector with zero mean and covariance kernel $\varsigma^2 {\rm exp}(-\varrho\Vert p_i - p_j\Vert)$ for pixel positions $p_i = i$ and $p_j = j$ for the $i$-th row and $j$-column in the pixel image, and add it to the image. After adding noise, each image is normalized to the interval $[0, 1]$. We set $\varsigma = 0.3$ and $\varrho = 0.1$. 
% except the \gls{ood} images in case of feature noise (see the following paragraph). After adding noise, each image is normalized to the interval $[0, 1]$.
% \begin{enumerate}
%     \item \emph{Label noise:} contaminated datapoints have false labels;
%     \item \emph{Feature noise:} contaminated datapoints have a different type of feature noise than the in-distribution datapoints;
%     \item \emph{Lower- and uppercase mixture:} contaminated datapoints are images of lowercase letters while in-distribution datapoints are uppercase letter.
% \end{enumerate}
% We add artificial noise to all the images by sampling \gls{iid} from a Bernoulli distribution with parameter ${\rm B}_{\rm p}$ on each pixel and random uniformly selecting the pixel value of the chosen pixels. The $0$-th agent has observed $n$ correctly labeled datapoints $(\bs{x}_i, y_i)$, $i=1,\dots,n$, meanwhile the $k$-th agent, $k=1,\dots,K$, has observed $m^{(k)} = m^{(k, 0)} + m^{(k, 1)}$ datapoints $(\bs{x}_i^k, y_i^k)$, $i=1,\dots, m^{(k)}$, where $m_1^{(k, 0)} + m_1^{(k, 1)}$ are mislabeled with $m_1^{(k, t)}$ sampled from a binomial distribution with parameters $\pi_k$ and $m^{(k, t)}$ for $t=0,1$.
%
% The $0$-th agent has observed $n$ in-distribution datapoints $(\bs{x}_i, y_i)$, $i=1,\dots,n$, meanwhile the $k$-th agent, $k=1,\dots,K$, has observed $m^{(k)} = m^{(k, 0)} + m^{(k, 1)}$ datapoints $(\bs{x}_i^k, y_i^k)$, $i=1,\dots, m^{(k)}$, where $m_1^{(k, 0)} + m_1^{(k, 1)}$ are \gls{ood} with $m_1^{(k, t)}$ sampled from a binomial distribution with parameters $\pi_k$ and $m^{(k, t)}$ for $t=0,1$.
%

In all cases we consider a subset of the available classes, denoted $\mathcal{Y}$, and sample randomly $n$ in-distribution datapoints for the $0$-th data agent. Meanwhile, the $k$-th data agent observes $2m$ datapoints and the number of \gls{ood} datapoints is sampled from ${\rm B}_{\pi_k}^{2m}$. We sample the contamination factors \gls{iid} according to a standard uniform distribution. For all the numerical experiments we set $K = 10$, $\lambda=\lfloor n/8\rfloor/(n+1)$, $i_0 = \lfloor m/3 \rfloor$, and $\gamma=0.25$.

As a conformal score we will use the \gls{ocsvm} \citep{Scholkopf2001:Estimating} due to its relative simplicity. This is done class-wise in the sense that
\begin{equation*}
    \hat{s}((\bs{X}, Y), (Z_1, \dots, Z_\ell)) = \sum_{i \in \mathcal{Y}} \mathbbm{1}[Y = i] ~ \hat{s}_i(\bs{X}, (\bs{X}_j : Y_j = i, j \in [\ell])),
\end{equation*}
where $(\bs{X}_j : Y_j = i, j\in[\ell])$ denotes the subset of $\bs{X}_1, \dots, \bs{X}_\ell$ for which $Y_j = i$, $j\in[\ell]$. Here $\hat{s}_i(\cdot, (\bs{X}_j : Y_j = i, j\in[\ell]))$ denotes the score function of a \gls{ocsvm} fitted to $(\bs{X}_j : Y_j = i, j\in[\ell])$. We use the standard implementation in \textit{scikit-learn} \citep{sklearn2011}. As a classifier we consider
% \gls{lr} and 
\gls{svc} for simplicity and robustness, again using the standard implementation in \textit{scikit-learn}.

We consider the following baselines: (i) \emph{oracle 1:} the complete oracle knows the true contamination factors and exactly which datapoints are contaminated; (ii) \emph{oracle 2:} the partial oracle knows the true contamination factors but not which datapoints are contaminated; and (iii) \emph{random:} we randomly select which agents to collaborate with after the first round.
% \begin{enumerate}
%     \item \emph{Oracle:} the oracle knows the true contamination factors and exactly which datapoints are contaminated.
%     % \item \emph{Random (all):} we randomly select which agents to collaborate with after the first round based on a fixed budget and use all the available data for model training.
%     % \item \emph{Random:} we randomly select which agents to collaborate with after the first round based on a fixed budget and use conformal outlier detection to filter out contaminated datapoints.
%     \item \emph{Random:} we randomly select which agents to collaborate with after the first round based on a fixed budget.
%     \item \emph{Best:} with a fixed budget the oracle chooses to collaborate with the agents with the smallest contamination factor. Unlike the \textit{Oracle} we do not know exactly which datapoints are contaminated (we only know the contamination factor).
%     % \item \textit{Threshold best (COD):} contrary to \textit{Budget best (COD)} we choose which agents to collaborate with after the first round by thresholding the true contamination factors, hence, only collaborating with agents that have a contamination factor which is smaller than the threshold hyperparameter. Then, we use conformal outlier detection to filter out contaminated datapoints.
% \end{enumerate}

% For all the numerical experiments we set $K = 10$, $\lambda=\frac{\lfloor n/8\rfloor}{n+1}$, $i_0 = \lfloor \frac{m}{3} \rfloor$, $\gamma=0.25$, and $\zeta = 0.25$.

\subsection{Conformal data contamination tests}
In this section the conformal data contamination tests are numerically analyzed. We will consider the digits $1$, $4$, and $7$ in the MNIST data contaminated with label noise.
We showcase in Figure~\ref{fig:pvals_dist} the (empirical) \gls{cdf} of the conformal data contamination p-values for the individual digits when $(\ell, n, m) = (60, 100, 40), (180, 400, 120)$ with $\pi_{\rm th} = 0.1$ and for $\pi=0.1,0.3$. From Figure~\ref{subfig:CCTest_pvals_cdf_pi0.1} we observe that the conformal data contamination tests are superuniform under the null, as dictated by the theory in Section~\ref{sec:results}, and notice that the Quantile p-value is nearly standard uniform, meanwhile the Fisher p-value is the most conservative in this case.
Figure~\ref{subfig:CCTest_pvals_cdf_pi0.3} shows that in this case the Storey and Sum tests are the most powerful followed by the Quantile test, and finally the weakest is the Fisher test. For all tests, increasing $\ell$, $n-\ell$, and $m$ yields a test which is more powerful. Note that increasing $\ell$ will tend to make the test less conservative while increasing $m$ will tend to make the test more conservative, as explained by the inequality in the proofs of Theorems~\ref{thm:upper_bound} and \ref{thm:appendix_asymp_test} in the supplementary material.
\begin{figure}
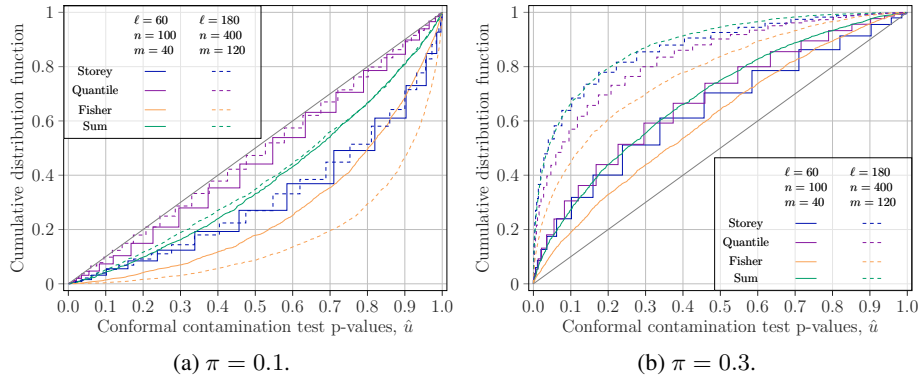

    \centering
    \begin{minipage}{0.43\linewidth}
        \centering
        \subfloat[$\pi=0.1$.]{\includegraphics[width=1\linewidth, page=33]{figures/__P7__Figures.pdf}\label{subfig:CCTest_pvals_cdf_pi0.1}}
    \end{minipage}~
    \begin{minipage}{0.43\linewidth}
        \centering
        \subfloat[$\pi=0.3$.]{\includegraphics[width=1\linewidth, page=32]{figures/__P7__Figures.pdf}\label{subfig:CCTest_pvals_cdf_pi0.3}}
    \end{minipage}
    \caption{Empirical CDF of conformal data contamination p-values for two settings of $(\ell, n, m)$ with $\pi_{\rm th} = 0.1$ for (a) $\pi=0.1$ and (b) $\pi=0.3$. The gray line is the CDF of a standard uniform.}
    \label{fig:pvals_dist}
\end{figure}

\setlength{\tabcolsep}{0.5em} % for the horizontal padding
\begin{table*}[t]
    \centering
    \caption{Empirical TDR and FDR as well as average estimated FDR from $5000$ simulations using the conformal contamination tests when $\ell=60$, $n=100$, and $m=40$, with $\pi_{\rm th} = 0.1$. Bold face indicates the best.}
    \vspace{0.5cm}
    \scriptsize
    \label{tab:TDR_FDR}
    \begin{tabular}{@{}lllllllllll@{}}
        \toprule
        \multicolumn{2}{c}{$K_{\rm budget}$} & 1 & 2  & 3  & 4 & 5  & 6 & 7 & 8 & 9\\ \midrule
        \multirow{4}{*}{TDR} & Storey & $0.9464$ & $0.8678$ & $0.7740$ & $0.6698$ & $0.5606$ & $0.4497$ & $0.3376$ & $0.2252$ & $0.1126$   \\
        & Quantile & $0.9366$ & $0.8569$ & $0.7658$ & $0.6650$ & $0.5587$ & $0.4488$ & $0.3371$ & $0.2250$ & $0.1126$  \\
        & Fisher & $\mathbf{0.9493}$ & $\mathbf{0.8720}$ & $\mathbf{0.7771}$ & $\mathbf{0.6719}$ & $\mathbf{0.5619}$ & $\mathbf{0.4503}$ & $\mathbf{0.3379}$ & $\mathbf{0.2253}$ & $\mathbf{0.1127}$  \\
        & Summation & $0.9452$ & $0.8678$ & $0.7741$ & $0.6704$ & $0.5612$ & $0.4500$ & $0.3377$ & $0.2252$ & $\mathbf{0.1127}$  \\\midrule
        \multirow{4}{*}{FDR} & Storey & $0.0579$ & $0.0314$ & $0.0153$ & $0.0073$ & $0.0037$ & $0.0016$ & $0.0009$ & $0.0004$ & $0.0002$  \\
        & Quantile & $0.0673$ & $0.0427$ & $0.0246$ & $0.0136$ & $0.0067$ & $0.0033$ & $0.0021$ & $0.0011$ & $0.0004$  \\
        & Fisher & $\mathbf{0.0551}$ & $\mathbf{0.0270}$ & $\mathbf{0.0117}$ & $\mathbf{0.0047}$ & $\mathbf{0.0019}$ & $\mathbf{0.0004}$ & $\mathbf{0.0002}$ & $\mathbf{0.0000}$ & $\mathbf{0.0000}$ \\
        & Summation & $0.0590$ & $0.0314$ & $0.0151$ & $0.0065$ & $0.0030$ & $0.0012$ & $0.0005$ & $0.0002$ & $\mathbf{0.0000}$ \\\midrule
        \multirow{4}{*}{$\mathbb{E}[\widehat{\rm FDR}]$} & Storey & $0.3536$ & $0.3088$ & $0.2585$ & $0.2104$ & $0.1651$ & $0.1242$ & $0.0934$ & $0.0697$ & $0.0501$  \\
        & Quantile & $\mathbf{0.2711}$ & $\mathbf{0.2320}$ & $\mathbf{0.1935}$ & $\mathbf{0.1512}$ & $0.1110$ & $0.0777$ & $0.0552$ & $0.0397$ & $0.0273$ \\
        & Fisher &  $0.4073$ & $0.3610$ & $0.3052$ & $0.2433$ & $0.1844$ & $0.1327$ & $0.0945$ & $0.0653$ & $0.0440$  \\
        & Summation & $0.3045$ & $0.2599$ & $0.2110$ & $0.1583$ & $\mathbf{0.1108}$ & $\mathbf{0.0705}$ & $\mathbf{0.0435}$ & $\mathbf{0.0254}$ & $\mathbf{0.0141}$ \\
        \bottomrule
    \end{tabular}
\end{table*}

We report the (empirical) \gls{tdr}, \gls{fdr}, and mean estimated \gls{fdr} for the conformal data contamination tests when $\ell=60$, $n=100$, and $m=40$, with $\pi_{\rm th} = 0.1$. From Table~\ref{tab:TDR_FDR} we first of all observe that Storey's \gls{bh} procedure with the conformal data contamination p-values is conservative, since we notice that $\mathbb{E}[\widehat{\rm FDR}] \geq {\rm FDR}$. Partly, the observed gap between the \gls{fdr} and the estimated \gls{fdr} is due to contamination factors being in the interior of the null hypothesis, i.e., $\pi_k < \pi_{\rm th}$. The \gls{tdr} shows a good and comparable level for all the conformal data contamination tests considered, however, consistently the Fisher test has the highest \gls{tdr} as well as the lowest \gls{fdr}. For all the test, when $K_{\rm budget} \in \{6, 7, 8, 9\}$, the \gls{tdr} is approximately $(K-K_{\rm budget})\mathbb{E}[1/b]$ where $b \sim {\rm B}_{1-\pi_{\rm th}}^{K}$ while the \gls{fdr} is nearly $0$, which is the best we can hope for, indicating that most of the time the data agents with the highest contamination factors are correctly ordered and discovered to be the most contaminated.
% Moreover, the \gls{tdr} nearly reaches $1$ when $K_{\rm budget} = 1$.
Generally, the Sum and Quantile conformal data contamination tests show the best (least liberal) estimates of the \gls{fdr}, and we can see that for $K_{\rm budget} = 4$ the \gls{tdr} is $0.67$ with $\mathbb{E}[\widehat{\rm FDR}] \approx 0.15$. Hence, when $K_{\rm budget} = 4$ and using the Quantile test, on average we can guarantee that less than $15~\%$ of the data agents not selected for collaboration in the second round actually has a contamination factor less than $\pi_{\rm th}=0.1$.
% %
% \begin{figure}
%     \centering
%     \begin{minipage}{0.49\linewidth}
%         \centering
%         \subfloat[]{\includegraphics[width=1\linewidth, page=17]{figures/__P7__Figures.pdf}\label{subfig:ProposedTDRalpha_17}}
%     \end{minipage}~
%     \begin{minipage}{0.49\linewidth}
%         \centering
%         \subfloat[]{\includegraphics[width=1\linewidth, page=18]{figures/__P7__Figures.pdf}\label{subfig:ProposedFDRalpha_17}}
%     \end{minipage}
%     \caption{TDR and FDR curves for the conformal contamination tests with varying choices of $\pi_{\rm th}$ on the MNIST data using digits $1$ and $7$. \mv{Could make it a table instead.}}
%     \label{fig:ProposedTest}
% \end{figure}
% %
% In Section~\ref{sec:interpretations} of the supplementary material we provide an overview and discussion of the relevant scenario parameters and hyperparameters.
% This is also accompanied by an ablation study with synthetic Gaussian data in Section~\ref{sec:Gaussian_data} of the supplementary material.
In Section~\ref{sec:Gaussian_data} of the supplementary material we provide an ablation study with synthetic Gaussian data.

% Additional numerical experiments with synthetic Gaussian data are presented in Section~\ref{sec:Gaussian_data}.

\subsection{Collaborative data sharing}
We show in Figure~\ref{fig:BudgetAccuracy} the relation between the fixed collaboration budget $K_{\rm budget}$ and the accuracy for classification estimated across $300$ data simulations for each of the three considered types of data contamination. No data subset selection is used meaning that all acquired datapoints are used for model fitting, since the focus of this work is on the proposed conformal data contamination tests. In Section~\ref{sec:boxplots} of the supplementary material, we report boxplots of the accuracy when $K_{\rm budget} = 5$.
\begin{figure}
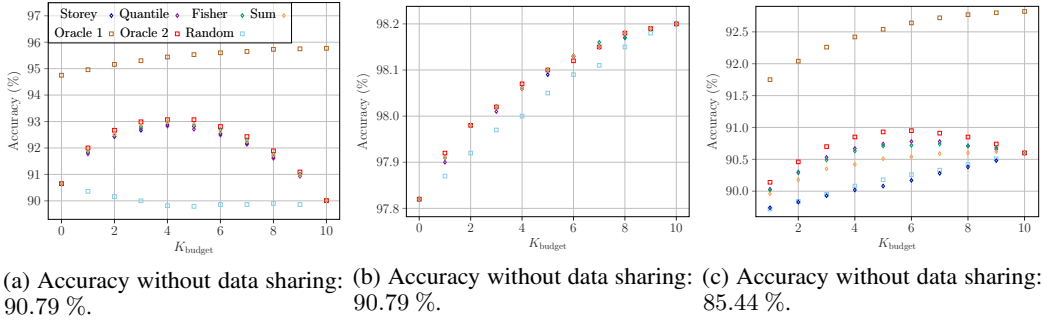

    \centering
    \begin{minipage}{0.32\linewidth}
        \centering
        \subfloat[Accuracy without data sharing: $90.79~\%$.]{\includegraphics[width=1\linewidth, page=26]{figures/__P7__Figures.pdf}\label{subfig:BudgetAccuracy_147_ln}}
    \end{minipage}~
    \begin{minipage}{0.32\linewidth}
        \centering
        \subfloat[Accuracy without data sharing: $90.79~\%$.]{\includegraphics[width=1\linewidth, page=27]{figures/__P7__Figures.pdf}\label{subfig:BudgetAccuracy_147_fn}}
    \end{minipage}~
    \begin{minipage}{0.32\linewidth}
        \centering
        \subfloat[Accuracy without data sharing: $85.44~\%$.]{\includegraphics[width=1\linewidth, page=28]{figures/__P7__Figures.pdf}\label{subfig:BudgetAccuracy_abc_femnist}}
    \end{minipage}
    \caption{Plot of the model accuracy of SVC against the collaboration budget when classifying three of the classes from MNIST (digits 1, 4, 7) or FEMNIST (letters A, B, C) for baselines and proposed methods (with fixed budget). (a) MNIST \emph{label noise}: $\ell = 60$, $n=100$, $m=40$; (b) MNIST \emph{feature noise}: $\ell = 60$, $n=100$, $m=40$; (c) FEMNIST \emph{lower- and uppercase mixture}: $\ell=120$, $n=200$, $m=80$. The legend is shared between the plots.}
    \label{fig:BudgetAccuracy}
\end{figure}
In all the considered cases, the proposed data sharing procedures outperform no data sharing and the \emph{random} baseline, even reaching the performance of \emph{oracle 2}, showing the efficiency of the proposed methodology.
% , while all the data sharing methods outperform the case of no sharing in all tested cases.

We consider label noise contamination on the digits 1, 4, and 7 in MNIST with $\ell=60$, $n=100$, and $m=40$ in Figure~\ref{subfig:BudgetAccuracy_147_ln}: we see that substantial improvements in accuracy can be achieved using the proposed procedures compared to no data sharing and the \emph{random} baseline. Additionally, we can nearly reach the accuracy of the \emph{oracle 2}. We also notice a gap to the baseline \emph{oracle 1} which is expected since datapoints contaminated with false labels should negatively impact accuracy.

With feature noise contamination, and otherwise same settings as for label noise contamination, we observe in Figure~\ref{subfig:BudgetAccuracy_147_fn} a much smaller difference in accuracy showing that having some noisy images is not detrimental to model fitting. Yet, small improvements with the proposed procedures are seen. \emph{Oracle 1} is omitted in the plot as it showed worse performance with accuracies approximately $0.95$ indicating that inclusion of the noise contaminated images in this case even improved personalization.

Finally, with the FEMNIST data we consider uppercase letters A, B, and C mixed with lowercase letters a, b, and c with $\ell=120$, $n=200$, and $m=80$ in Figure~\ref{subfig:BudgetAccuracy_abc_femnist}: we again notice that the proposed procedures can improve performance compared to the \emph{random} baseline, and can almost reach the performance of \emph{oracle 2}. Note however the poor performance with the Storey test showing the weakness of the hyperparameter dependent conformal data contamination tests.

\section{Conclusion}\label{sec:conclusion}
% In this work, we addressed the fundamental challenge of identifying collaboration partners in a data sharing scenario without relying on distributional assumptions. We introduced a novel, distribution-free hypothesis testing approach for detecting data agents who exceeds a user-specified data contamination threshold. Our methodology relies on conformal p-values and enables two-sample testing with arbitrary contamination levels and facilitating false discovery rate control across multiple agents via the Benjamini–Hochberg procedure.
% We demonstrated that our approach allows a data agent to strategically and efficiently acquire personalized, high-value data while avoiding harmful or irrelevant sources. Experiments confirmed the effectiveness, robustness, and practical relevance of our methods in collaborative learning.
% Our framework opens up several promising directions for future work, including integration with incentive mechanisms in real-world data markets, and extension to continual learning with online data acquisition policies.

We have presented a data sharing framework which selects collaboration partners using novel distribution-free testing procedures, named \emph{conformal data contamination tests}. Leveraging conformal p-values, our method detects agents whose data exceed a contamination threshold while providing false discovery rate guarantees. This enables strategic acquisition of high-quality and personalized data while avoiding irrelevant or harmful data sources. Experiments validate the effectiveness and practicality of our approach.

Our framework opens up several promising directions for future work, including integration with incentive mechanisms in real-world data markets, and extension to continual learning with online data acquisition policies.

%\putbib[bib]
%\end{bibunit}

%\newpage
\appendix

% \newpage
\renewcommand{\thesection}{S\arabic{section}}
% \renewcommand{\thesubsection}{S\arabic{section}.\arabic{subsection}}

%\begin{bibunit}[agsm]
\title{Supplementary Material for\\"Conformal Data Contamination Tests for \\ Trading or Sharing of Data"}

\author{%
  Martin V. Vejling \\
  Department of Mathematical Sciences and \\
  Department of Electronic Systems\\
  Aalborg University\\
  Aalborg, Denmark \\
  \texttt{mvv@\{math,es\}.aau.dk} \\
  % examples of more authors
  \And
  Shashi Raj Pandey \\
  Department of Electronic Systems \\
  Aalborg University \\
  Aalborg, Denmark \\
  \texttt{srp@es.aau.dk} \\
  \AND
  Christophe A. N. Biscio \\
  Department of Mathematical Sciences \\
  Aalborg University \\
  Aalborg, Denmark \\
  \texttt{christophe@math.aau.dk} \\
  \And
  Petar Popovski \\
  Department of Electronic Systems \\
  Aalborg University \\
  Aalborg, Denmark \\
  \texttt{petarp@es.aau.dk} \\
}

\maketitle

\section{Conformal data contamination tests: theoretical details and proofs}\label{sec:CCTests}

% \subsection{Notations}
% Let $[n] = \{1, \dots, n\}$ and $[n]_0 = \{0, \dots, n\}$ denote index sets for $n \geq 1$, and in an abuse of notation let $[n]/m = \{1/m, \dots, n/m\}$.

% \subsection{conformal data contamination tests}\label{subsec:CCTests}
We consider the scenario where we have observed a null sample $\mathcal{D}_{\rm null} = \{Z_i\}_{i=1}^n$, $Z_i \sim P_0$, as well as a test sample $\mathcal{D}_{\rm test} = \{Z_{n+j}\}_{j=1}^m$, $Z_{n+j} \sim P_j$. We call data from $P_0$ inliers and data from some other distribution $P_1$ outliers.
We define $\bar{\mathcal{H}}_0 = \{j \in [m] : Z_{n+j} \sim P_0\}$, and $\bar{\mathcal{H}}_1 = [m]\setminus\bar{\mathcal{H}}_0$.
Following the conformal outlier detection procedure as described in Section~\ref{subsec:conformal_outlier_detection}, conformal p-values are computed $\hat{p}_1,\dots,\hat{p}_m$. As in Section~\ref{subsec:conformal_outlier_detection} we shall assume that the conformal score is continuously distributed, or almost surely has no ties, and that the inliers are exchangeable conditioned on the outliers \citep{Bates2023:Testing}.
% We can describe the distribution of the test data by the contamination model: $P_j = (1-\pi)P_0 + \pi P_1$ for a proper outlier distribution $P_1$ \cite{Blanchard2010:Semi}, and where $\pi \in [0, 1]$ is the contamination factor.

Two classical estimators of the contamination factor $\pi$ are the Storey estimator of \cite{Storey2003:Strong} and the quantile estimator of \cite{Benjamini2006:Adaptive} defined respectively as
\begin{align*}
    \hat{\pi}^{\rm storey} = 1 - \frac{\sum_{i=1}^m \mathbbm{1}[\hat{p}_i > \lambda]}{m (1-\lambda)},\\
    \hat{\pi}^{\rm quantile} = 1 - \frac{i_0 + 1}{m (1-\hat{p}_{(m-i_0)})},
\end{align*}
where $\hat{p}_{(1)}\leq\cdots \leq \hat{p}_{(m)}$ are the ordered conformal p-values. Here $\lambda \in [n]/(n+1)$ and $i_0 \in [m-1]_0$ are hyperparameters controlling the bias-variance trade-off. For these two estimators, the parts related to the data are $\sum_{i=1}^m \mathbbm{1}[\hat{p}_i > \lambda]$ and $\hat{p}_{(m-i_0)}$, respectively, and so in the following we will use these as test statistics, noting that small values of these tests statistics will yield large estimates of $\pi$ and so small values are evidence against the null hypothesis $H_0 : \pi \leq \pi_{\rm th}$.

In the following theorem, we consider the probability of rejecting under $H_0$ when using $T^{\rm storey} = \sum_{i=1}^m \mathbbm{1}[\hat{p}_i > \lambda]$ as the test statistic. Bounding this probability will reveal how to construct a valid p-value for $H_0$.

\begin{theorem}\label{thm:upper_bound}
    Let $T^{\rm storey} = \sum_{i=1}^m \mathbbm{1}[\hat{p}_i > \lambda]$ be the test statistic parametrised by $\lambda \in [n]/(n+1)$ for conformal p-values $\hat{p}_1,\dots,\hat{p}_m$, and consider a rejection region given as $\{0,\dots,r\}$ for $r \in \{0, 1, \dots, m\}$. Then, the probability of rejection under the null hypothesis $H_0 : \pi \leq \pi_{\rm th}$, $\pi_{\rm th} \in [0, 1)$, is upper bounded by
    \begin{equation}\label{eq:upper_bound}
        % \begin{split}
            % \mathbb{P}_{H_0}(T^{\rm storey} \leq r) &\leq \sum_{k = r+1}^m \frac{n!k!(x+k-r-1)!(n-x+r)!}{(n+k)!x!(n-x)!(k-r-1)!r!}\pFq{3}{2}{1,-x,n-x+r+1}{n-x+1,1-x-k+r}{1}\\
            % &\qquad\qquad\times \frac{m!(1-\pi_{\rm th})^k\pi_{\rm th}^{m-k}}{k!(m-k!)} + \sum_{k = 0}^r \frac{m!(1-\pi_{\rm th})^k\pi_{\rm th}^{m-k}}{k!(m-k!)},
        \mathbb{P}_{H_0}(T^{\rm storey} \leq r) \leq \sum_{k = r+1}^m {\rm B}_{\pi_{\rm th}}^{m}(k) F_{\rm NHG}\big(\lfloor \lambda (n+1)\rfloor - 1; n+k, n, k-r\big) + \sum_{k = 0}^r {\rm B}_{\pi_{\rm th}}^{m}(k),
        % \end{split}
    \end{equation}
    where $F_{\rm NHG}(x; n+k, n, k-r)$ is the \gls{cdf} of the negative hypergeometric distribution evaluated at $x \in \{0, 1, \dots, n\}$ with population size $n+k$, number of success states $n$, and number of failures $k-r$, and ${\rm B}_{\pi_{\rm th}}^{m}(k)$ is the probability mass function of a Binomial distribution evaluated at $k$ with parameters $\pi_{\rm th}$ and $m$. Moreover, if $\mathbb{P}(\hat{p}_i \leq \lambda, \forall i \in \bar{\mathcal{H}}_1) = 1$ and $\pi = \pi_{\rm th}$ the inequality is exact.
\end{theorem}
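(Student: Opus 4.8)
The plan is to follow the three-step scheme announced in the remark after Theorem~\ref{thm:nonasymptotic_tests}: condition on which test points are inliers, drop the nonnegative contribution of the outliers to $T^{\rm storey}$, and recognise what survives as a tail of a negative hypergeometric law, with a final worst-case reduction in $\pi$. Concretely, I would fix $\pi\le\pi_{\rm th}$ and condition on the labels $\xi_i=\mathbbm{1}[i\in\bar{\mathcal{H}}_1]$, $i=1,\dots,m$ (i.i.d.\ Bernoulli$(\pi)$, so $|\bar{\mathcal{H}}_0|\sim\mathrm{Binomial}(m,1-\pi)$) together with the outlier values $(Z_{n+i})_{i\in\bar{\mathcal{H}}_1}$. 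Splitting $T^{\rm storey}=T_0+T_1$ with $T_0=\sum_{i\in\bar{\mathcal{H}}_0}\mathbbm{1}[\hat p_i>\lambda]$ and $T_1=\sum_{i\in\bar{\mathcal{H}}_1}\mathbbm{1}[\hat p_i>\lambda]\ge 0$, one has $\{T^{\rm storey}\le r\}\subseteq\{T_0\le r\}$, hence $\mathbb{P}(T^{\rm storey}\le r\mid\xi,(Z_{n+i})_{i\in\bar{\mathcal{H}}_1})\le\mathbb{P}(T_0\le r\mid\xi,(Z_{n+i})_{i\in\bar{\mathcal{H}}_1})$. This, together with the worst-case step below, is the only place an inequality enters, and it collapses to an identity the moment $T_1\equiv 0$.

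Next I would evaluate the conditional probability of $\{T_0\le r\}$. By the assumed conditional exchangeability of $(Z_1,\dots,Z_n,(Z_{n+i})_{i\in\bar{\mathcal{H}}_0})$ given the outliers, together with the permutation invariance of $\hat s$ and the no-ties assumption, the $n+k$ scores $\hat s_1,\dots,\hat s_n,(\hat s_{n+i})_{i\in\bar{\mathcal{H}}_0}$ (with $k:=|\bar{\mathcal{H}}_0|$) are exchangeable, so the ranks of the $k$ inlier test scores among them are uniform over all $k$-subsets; in particular the conditional law of $T_0$ depends on $k$ alone. Because $\lambda\in[n]/(n+1)$ forces $\lambda(n+1)\in\mathbb{Z}$, one has $\hat p_i>\lambda\iff C_i:=\#\{j\le n:\hat s_j\le\hat s_{n+i}\}\ge\lambda(n+1)$, and writing $C_{(1)}\le\cdots\le C_{(k)}$ for the ordered values $(C_i)_{i\in\bar{\mathcal{H}}_0}$ gives, for $k>r$, the identity $\{T_0\le r\}=\{C_{(k-r)}\le\lfloor\lambda(n+1)\rfloor-1\}$ (and $T_0\le r$ trivially when $k\le r$). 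Finally $C_{(k-r)}$ is exactly the number of calibration scores (``successes'', $n$ of them) preceding the $(k-r)$-th inlier test score (``failure'') in a uniformly random arrangement of the $n+k$ scores, i.e.\ it has the negative hypergeometric law with population $n+k$, $n$ successes and stopping count $k-r$; this is the order-statistic computation of \cite{Gazin2024:Transductive,Biscio2025:Conformal}. Hence $\mathbb{P}(T_0\le r\mid\xi,(Z_{n+i})_{i\in\bar{\mathcal{H}}_1})=g(k)$, where $g(k):=F_{\rm NHG}(\lfloor\lambda(n+1)\rfloor-1;n+k,n,k-r)$ for $k>r$ and $g(k):=1$ for $k\le r$.

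Taking expectations over the labels yields $\mathbb{P}_\pi(T^{\rm storey}\le r)\le\mathbb{E}[g(|\bar{\mathcal{H}}_0|)]$ with $|\bar{\mathcal{H}}_0|\sim\mathrm{Binomial}(m,1-\pi)$. A short combinatorial coupling --- take a uniformly random word of $n$ successes and $k+1$ failures, delete one failure at random, and compare the $(k+1-r)$-th failure of the long word with the $(k-r)$-th failure of the short one --- shows $g$ is nonincreasing in $k$; since $\mathrm{Binomial}(m,1-\pi)$ stochastically dominates $\mathrm{Binomial}(m,1-\pi_{\rm th})$ for $\pi\le\pi_{\rm th}$, I get $\mathbb{E}[g(|\bar{\mathcal{H}}_0|)]\le\sum_{k=0}^m{\rm B}_{\pi_{\rm th}}^{m}(k)\,g(k)$, and splitting the sum at $k=r$ is exactly the right-hand side of \eqref{eq:upper_bound}. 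For the exactness statement: $\mathbb{P}(\hat p_i\le\lambda,\ \forall i\in\bar{\mathcal{H}}_1)=1$ forces $T_1\equiv 0$, turning the first inequality into an equality, and $\pi=\pi_{\rm th}$ makes $|\bar{\mathcal{H}}_0|$ exactly $\mathrm{Binomial}(m,1-\pi_{\rm th})$, turning the dominance step into an equality; since every other step is an identity, the bound is attained.

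The step I expect to be the main obstacle is the middle one: pinning down the conditional law of $T_0$ as a negative hypergeometric tail. It requires carefully combining the conditional exchangeability assumption with the permutation invariance of the (possibly transductively trained) score so that the inlier test scores are genuinely interchangeable with the calibration scores, reducing the event $\{T_0\le r\}$ to a statement about the single order statistic $C_{(k-r)}$, and matching the negative hypergeometric parameterisation precisely --- including the off-by-one bookkeeping hidden inside $\lfloor\lambda(n+1)\rfloor-1$ and the boundary convention $g(k)=1$ for $k\le r$. The monotonicity of $g$ needed for the worst-case reduction is the only other nontrivial ingredient, but it is a clean property of the negative hypergeometric distribution, independent of the data-generating model.
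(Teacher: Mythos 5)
Your proposal is correct and follows essentially the same route as the paper's proof: conditioning on the set of inliers, discarding the nonnegative outlier contribution to $T^{\rm storey}$, identifying the conditional law of the relevant order statistic as negative hypergeometric, and reducing to the worst case $\pi = \pi_{\rm th}$ via monotonicity in $k$. The only difference is presentational — you justify the monotonicity of $g$ by an explicit coupling and phrase the worst-case step as stochastic dominance, where the paper argues the supremum over $\pi_0 \in [0,\pi_{\rm th}]$ directly — but the substance is identical.
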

\begin{proof}
    By the law of total probability
    \begin{align}
        \mathbb{P}_{H_0}(T^{\rm storey} \leq r) &= \sum_{k = 0}^m \mathbb{P}_{H_0}(k) \mathbb{P}\Big(\sum_{i=1}^{k} \mathbbm{1}[\hat{p}_i > \lambda] + \sum_{i=k+1}^m \mathbbm{1}[\hat{p}_i > \lambda] \leq r~\big|~\bar{\mathcal{H}}_0 = [k]\Big) \nonumber\\
        &\leq \sup_{\pi_0 \in [0, \pi_{\rm th}]} \sum_{k = 0}^m {\rm B}_{\pi_0}^{m}(k) \mathbb{P}\Big(\sum_{i=1}^{k} \mathbbm{1}[\hat{p}_i > \lambda] \leq r~\big|~\bar{\mathcal{H}}_0 = [k]\Big). \label{eq:pi0sup}
    \end{align}
    The inequality is tight when $\hat{p}_i \leq \lambda$ for all $i \in \mathcal{H}_1$ which with a well chosen $\lambda$ can occur if the conformal score separates well the inliers from the outliers.
    The distribution of the null data points is binomial with success probability $\pi_0 \in [0, \pi_{\rm th}]$,
    \begin{equation*}
        {\rm B}_{\pi_0}^{m}(k) = \binom{m}{k}(1-\pi_0)^k\pi_0^{m-k}.
    \end{equation*}
    The probability of the sum can be expressed in terms of the marginal \gls{cdf} of the ordered conformal p-values as
    % \begin{align}
    %     \mathbb{P}\Big(\sum_{i=1}^{k} \mathbbm{1}[p_i > \lambda] \leq r~\big|~\mathcal{H}_0 = [k]\Big) &= \mathbb{P}\Big(\sum_{i=1}^{k} \mathbbm{1}[p_{(i)} > \lambda] \leq r~\big|~\mathcal{H}_0 = [k]\Big)\nonumber\\
    %     &= \label{eq:marginal_probability_under_null}
    %     \begin{cases}
    %         1 & \text{if } k \leq r,\\
    %         \mathbb{P}\big(p_{(k - r)} \leq \lambda\big) & \text{otherwise},
    %     \end{cases}
    % \end{align}
    \begin{equation*}
        \mathbb{P}\Big(\sum_{i=1}^{k} \mathbbm{1}[\hat{p}_i > \lambda] \leq r~\big|~\bar{\mathcal{H}}_0 = [k]\Big) = 
        \begin{cases}
            1 & \text{if } k \leq r,\\
            \mathbb{P}\big(\hat{p}_{(k - r)} \leq \lambda\big) & \text{otherwise},
        \end{cases}
    \end{equation*}
    where $\hat{p}_{(1)}\leq \hat{p}_{(2)} \leq \cdots \leq \hat{p}_{(k)}$ are the ordered conformal p-values among the $k$ p-values from the null. Notably, $\hat{p}_{(k - r)}$ is distributed according to a negative hypergeometric distribution with population size $n+k$, number of success states $n$, and number of failures $k-r$ supported on $\{1, 2, \dots, n+1\}$ \citep{Biscio2025:Conformal}, and the \gls{cdf} can be expressed as
    \begin{align*}
        % \mathbb{P}_{H_0}\big(p_{(k - r)} \leq \lambda ~|~|\mathcal{H}_0| = k\big) %&= \frac{n!k!}{(n+k)!} \sum_{j=1}^{\lfloor \lambda (n+1)\rfloor} \binom{j+k-r-2}{k-r-1}\binom{n+1+r-j}{r}.\\
        % \begin{split}
        %     &=\binom{n}{\lfloor \lambda (n+1)\rfloor-1}\frac{(\lfloor \lambda (n+1)\rfloor+k-r-2)!(n-\lfloor \lambda (n+1)\rfloor+r+1)!k!}{(n+k)!(k-r-1)!r!}\\
        %     &{}_3 F_2(1, -\lfloor \lambda (n+1)\rfloor+1,n-\lfloor \lambda (n+1)\rfloor+r+2; n-\lfloor \lambda (n+1)\rfloor+2, 2-\lfloor \lambda (n+1)\rfloor-k+r; 1).
        % \end{split}
        \mathbb{P}\big(\hat{p}_{(k - r)} \leq \lambda\big) &= F_{\rm NHG}\big(\lfloor \lambda (n+1)\rfloor - 1; n+k, n, k-r\big)\\
        &=\frac{n!k!}{(n+k)!}\frac{(x+k-r-1)!(n-x+r)!}{x!(n-x)!(k-r-1)!r!}
        % {}_3 F_2\begin{pmatrix}1, -x,n-x+r+1; n-x+1, 1-x-k+r; 1\end{pmatrix}.
        \pFq{3}{2}{1,-x,n-x+r+1}{n-x+1,1-x-k+r}{1},
    \end{align*}
    where $x := \lfloor \lambda (n+1)\rfloor-1$, and ${}_3 F_2$ is the generalized hypergeometric function.

    To conclude the proof, observe that the supremum in Eq.~\eqref{eq:pi0sup} occurs at $\pi_{\rm th}$, since $\sum_{k=0}^l {\rm B}_{\pi_0}^{m}(k)$, $l \in \{0,\dots,m\}$ is a non-decreasing function in $\pi_0$, and $\mathbb{P}(\sum_{i=1}^{k} \mathbbm{1}[\hat{p}_i > \lambda] \leq r~|~\bar{\mathcal{H}}_0 = [k])$ is a decreasing function of $k$ as the indicator function is non-negative.
    % By Lemma \ref{lemma:monotone_pi}, the supremum in Eq. \eqref{eq:pi0sup} occurs at $\pi_{\rm th}$.
    % \mv{Can we evaluate this sum analytically? I have tried but it does not look so easy... We are still fine to just evaluate it numerically. When $\lambda > 0.5$ we can reverse the sum and subtract the result from $1$.}
\end{proof}
% The null hypothesis in Theorem \ref{thm:upper_bound} is $\pi = \pi_0$, however, we are interested in $H_0 : \pi < \pi_{\rm th}$. This can be considered a composite null hypothesis where $\pi < \pi_{\rm th}$ is unknown. As usually with composite null hypotheses, we can estimate the unknown parameter, in this case $\pi$.
% Doing so, we can substitute $\pi$ in Eq.~\ref{eq:upper_bound} by our estimate $\hat{\pi} = \mathbbm{1}[\hat{\pi}_{\rm storey} < \pi_{\rm th}] \hat{\pi}_{\rm storey} + \mathbbm{1}[\hat{\pi}_{\rm storey} \geq \pi_{\rm th}] \pi_{\rm th}$.
% \mv{This asymptotic argument is not so easy since we cannot make a consistent estimator of $\pi$ when we do not know $\mathbb{P}_{H_1}(p > \lambda)$.}
% Now, the test works asymptotically as $n$ and $m$ grows. \mv{Ref. Conditions.}

We have now specified everything necessary to compute an upper bound on the rejection probability (under the null).
% The problem is now the inverse: determine the smallest $r(\alpha)$ such that the upper bound on the rejection probability is upper bounded by $\alpha$.
% Assuming $r$ is invertible, we can define a p-value as $\hat{p}^{\rm storey} = r^{-1}(T)$ which is then superuniform under the null hypothesis, i.e., $\mathbb{P}_{H_0}(\hat{p}^{\rm storey} \leq \alpha) \leq \alpha$. Naturally, $r$ is not invertible as it is a mapping from a continuous domain to a finite discrete domain. However, it holds that one point in the co-domain will map to an interval in the domain. We may then consider that $r^{-1} : \{0, \dots, m\} \to \mathcal{B}(0, 1)$ where $\mathcal{B}(0, 1)$ is the Borel $\sigma$-algebra on $(0, 1)$. To define a unique inversion we may define $\Tilde{r}^{-1} : \{0, 1, \dots, m\} \to (0, 1)$ as $\hat{p}^{\rm storey} := \Tilde{r}^{-1}(T) := \inf r^{-1}(T)$, i.e., mapping to the smallest viable p-value.
From this it follows that the corresponding p-value is given by
\begin{equation}
    % \begin{split}
        % \hat{p} &=  \sum_{k = T+1}^m \frac{m!(1-\pi_{\rm th})^k\pi_{\rm th}^{m-k}n!k!}{k!(m-k)!(n+k)!} \sum_{j=1}^{\lfloor \lambda (n+1)\rfloor} \binom{j+k-T-2}{k-T-1}\binom{n+1+T-j}{T}\\
        % &\qquad+ \sum_{k = 0}^T \frac{m!(1-\pi_{\rm th})^k\pi_{\rm th}^{m-k}}{k!(m-k!)}.
        % \hat{p}^{\rm storey} &=\sum_{k = T+1}^m \frac{n!k!(x+k-T-1)!(n-x+T)!}{(n+k)!x!(n-x)!(k-T-1)!T!}\pFq{3}{2}{1,-x,n-x+T+1}{n-x+1,1-x-k+T}{1}\\
        % &\qquad\qquad\times \frac{m!(1-\pi_{\rm th})^k\pi_{\rm th}^{m-k}}{k!(m-k!)} + \sum_{k = 0}^T \frac{m!(1-\pi_{\rm th})^k\pi_{\rm th}^{m-k}}{k!(m-k!)}.
    \hat{u}^{\rm storey} = \sum_{k = T^{\rm storey}+1}^m {\rm B}_{\pi_{\rm th}}^{m}(k) F_{\rm NHG}\big(\lfloor \lambda (n+1)\rfloor - 1; n+k, n, k-T^{\rm storey}\big) + \sum_{k = 0}^{T^{\rm storey}} {\rm B}_{\pi_{\rm th}}^{m}(k).
    % \end{split}
\end{equation}
By construction this p-value is \textit{marginally} valid, i.e., $\mathbb{P}_{H_0}(\hat{u}^{\rm storey} \leq \alpha) \leq \alpha$, and the p-value is discretely distributed with at most $m+1$ levels (including $0$ and $1$), since $T^{\rm storey}$ is discretely distributed on $\{0, 1, \dots, m\}$. Hence, not all significance levels $\alpha \in (0, 1)$ can be reached, however, as $m$ grows we can get arbitrarily close.

As a corollary, we present the corresponding upper bound on the rejection probability under $H_0$ when using $(n+1) \hat{p}_{(m-i_0)}$ as our test statistic.

\begin{corollary}\label{thm:upper_bound_quant}
    Let $T^{\rm quantile} = (n+1) \hat{p}_{(m-i_0)}$ be the test statistic parametrised by $i_0 \in [m-1]_0$ for conformal p-values $\hat{p}_1,\dots,\hat{p}_m$, and consider a rejection region given as $\{1, \dots, r\}$ for $r \in \{1, 2, \dots, n+1\}$. Then, the probability of rejection under the null hypothesis $H_0 : \pi \leq \pi_{\rm th}$, $\pi_{\rm th} \in [0, 1)$, is upper bounded by
    \begin{equation}\label{eq:upper_bound_quant}
        % \begin{split}
            % \mathbb{P}_{H_0}\Big(T_{\rm quant} \leq \frac{r}{n+1}\Big) &\leq \sum_{k = i_0+1}^m \frac{m!(1-\pi_{\rm th})^k\pi_{\rm th}^{m-k}n!k!}{k!(m-k)!(n+k)!} \sum_{j=1}^{r} \binom{j+k-i_0-2}{k-i_0-1}\binom{n+1+i_0-j}{i_0}\\
            % &\qquad+ \sum_{k = 0}^{i_0} \frac{m!(1-\pi_{\rm th})^k\pi_{\rm th}^{m-k}}{k!(m-k!)}.
            % \mathbb{P}_{H_0}(T_{\rm quant} \leq r)
            % &\leq \sum_{k = i_0+1}^m \frac{n!k!(r+k-i_0-2)!(n-r+i_0+1)!}{(n+k)!(r-1)!(n-x)!(k-i_0-1)!i_0!}\pFq{3}{2}{1,-r+1,n-r+i_0+2}{n-r+2,2-r-k+i_0}{1}\\
            % &\qquad\qquad\times \frac{m!(1-\pi_{\rm th})^k\pi_{\rm th}^{m-k}}{k!(m-k!)} + \sum_{k = 0}^{i_0} \frac{m!(1-\pi_{\rm th})^k\pi_{\rm th}^{m-k}}{k!(m-k!)}.
        % \end{split}
        \mathbb{P}_{H_0}(T^{\rm quantile} \leq r) \leq \sum_{k = i_0+1}^m {\rm B}_{\pi_{\rm th}}^{m}(k) F_{\rm NHG}\big(r - 1; n+k, n, k-i_0\big) + \sum_{k = 0}^{i_0} {\rm B}_{\pi_{\rm th}}^{m}(k).
    \end{equation}
    Moreover, if $\mathbb{P}(\hat{p}_i \leq r/(n+1), \forall i \in \bar{\mathcal{H}}_1) = 1$ and $\pi = \pi_{\rm th}$ the inequality is exact.
\end{corollary}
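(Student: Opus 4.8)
The plan is to reduce the statement to Theorem~\ref{thm:upper_bound} by an elementary reindexing, rather than re-running the law-of-total-probability argument. First I would record the trivial equivalence of events coming from the definition of an order statistic: since $\hat{p}_{(m-i_0)}$ is the $(m-i_0)$-th smallest among $\hat{p}_1,\dots,\hat{p}_m$, for any threshold $t$ we have $\hat{p}_{(m-i_0)}\leq t$ if and only if at least $m-i_0$ of the $\hat{p}_i$ are $\leq t$, equivalently $\sum_{i=1}^m\mathbbm{1}[\hat{p}_i>t]\leq i_0$. Taking $t=r/(n+1)$ gives
\[
\{T^{\rm quantile}\leq r\}=\Big\{\textstyle\sum_{i=1}^m\mathbbm{1}\big[\hat{p}_i>r/(n+1)\big]\leq i_0\Big\},
\]
which is exactly the event $\{T^{\rm storey}\leq i_0\}$ of Theorem~\ref{thm:upper_bound} with hyperparameter $\lambda=r/(n+1)$ and with $i_0$ playing the role of the rejection threshold $r$ there.

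Next I would invoke Theorem~\ref{thm:upper_bound} directly. For $r\in\{1,\dots,n\}$ one has $\lambda=r/(n+1)\in[n]/(n+1)$, so the theorem applies verbatim, and since $r$ is an integer $\lfloor\lambda(n+1)\rfloor-1=r-1$; substituting $r\mapsto i_0$ and $\lfloor\lambda(n+1)\rfloor-1\mapsto r-1$ into \eqref{eq:upper_bound} yields precisely \eqref{eq:upper_bound_quant}. The boundary value $r=n+1$ lies outside the range of $\lambda$ allowed by Theorem~\ref{thm:upper_bound} and must be handled separately, but only trivially: conformal p-values lie in $\{1/(n+1),\dots,1\}$, so $T^{\rm quantile}\leq n+1$ holds almost surely, while the right-hand side of \eqref{eq:upper_bound_quant} equals $\sum_{k=0}^m{\rm B}_{\pi_{\rm th}}^m(k)=1$ (using $F_{\rm NHG}(n;n+k,n,k-i_0)=1$ at the top of the support), so the inequality is automatic.

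For the exactness claim I would simply transport the exactness condition of Theorem~\ref{thm:upper_bound}: that inequality is exact when $\mathbb{P}(\hat{p}_i\leq\lambda,\ \forall i\in\bar{\mathcal{H}}_1)=1$ and $\pi=\pi_{\rm th}$, and under $\lambda=r/(n+1)$ this reads $\mathbb{P}(\hat{p}_i\leq r/(n+1),\ \forall i\in\bar{\mathcal{H}}_1)=1$ and $\pi=\pi_{\rm th}$, which is exactly the stated hypothesis. I do not expect a real obstacle here; the only points needing care are (i) verifying that the floor in \eqref{eq:upper_bound} collapses to $r-1$ because $r$ is an integer, and (ii) noticing that $r=n+1$ falls outside the permitted range $\lambda\in[n]/(n+1)$ and so requires the separate (but immediate) argument above. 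An alternative would be to mirror the proof of Theorem~\ref{thm:upper_bound} from scratch with the order-statistic statistic, but the reduction is cleaner and avoids re-deriving the negative hypergeometric CDF.
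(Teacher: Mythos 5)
Your proposal is correct and follows essentially the same route as the paper's own proof, which likewise reduces the claim to Theorem~\ref{thm:upper_bound} via the identity $\mathbb{P}(\hat{p}_{(m-i_0)} \leq r/(n+1)) = \mathbb{P}(\sum_{i=1}^m \mathbbm{1}[\hat{p}_i > r/(n+1)] \leq i_0)$ with $\lambda = r/(n+1)$. Your additional care with the integer floor and the boundary case $r = n+1$ is a welcome refinement but does not change the argument.
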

\begin{proof}
    The proof follows immediately by noticing that
    \begin{equation*}
        \mathbb{P}\Big(\hat{p}_{(m-i_0)} \leq \frac{r}{n+1}\Big) = \mathbb{P}\Big(\sum_{i=1}^m \mathbbm{1}\big[\hat{p}_i > \frac{r}{n+1}\big] \leq i_0\Big),
    \end{equation*}
    and then using Theorem~\ref{thm:upper_bound}.
\end{proof}
The p-value is given by
\begin{equation}
    % \begin{split}
        % \hat{p}_{\rm quant} &=  \sum_{k = i_0+1}^m \frac{m!(1-\pi_{\rm th})^k\pi_{\rm th}^{m-k}n!k!}{k!(m-k)!(n+k)!} \sum_{j=1}^{\lfloor T_{\rm quant}(n+1)\rfloor} \binom{j+k-i_0-2}{k-i_0-1}\binom{n+1+i_0-j}{i_0}\\
        % &\qquad+ \sum_{k = 0}^{i_0} \frac{m!(1-\pi_{\rm th})^k\pi_{\rm th}^{m-k}}{k!(m-k!)}.
        % \hat{p} &= \sum_{k = i_0+1}^m \frac{n!k!(T+k-i_0-2)!(n-T+i_0+1)!}{(n+k)!(T-1)!(n-x)!(k-i_0-1)!i_0!}\pFq{3}{2}{1,-T+1,n-T+i_0+2}{n-T+2,2-T-k+i_0}{1}\\
        % &\qquad\qquad\times \frac{m!(1-\pi_{\rm th})^k\pi_{\rm th}^{m-k}}{k!(m-k!)} + \sum_{k = 0}^{i_0} \frac{m!(1-\pi_{\rm th})^k\pi_{\rm th}^{m-k}}{k!(m-k!)}.
    % \end{split}
    \hat{u}^{\rm quantile} = \sum_{k = i_0+1}^m {\rm B}_{\pi_{\rm th}}^{m}(k) F_{\rm NHG}\big(T^{\rm quantile} - 1; n+k, n, k-i_0\big) + \sum_{k = 0}^{i_0} {\rm B}_{\pi_{\rm th}}^{m}(k).
\end{equation}
This is a valid p-value and takes up to $n+1$ unique values. Comparing the p-values proposed in the preceding, a type of duality is observed, in which the roles of the test statistics and the hyperparameters are swapped between the two.

\begin{remark}
    A possible generalization would be to consider a rejection region on a vector of the ordered p-values, for instance considering two of the ordered p-values rather than just one. In such a case, deriving the upper bound would require evaluating the pairwise distribution of the ordered conformal p-values, and the p-value would depend on two hyperparameters rather than one.
\end{remark}

\cite{Bates2023:Testing} showed a general result regarding the asymptotic distribution of test statistics of the form $\sum_{i=1}^m G(p_i)$, for some general class of functions $G$. In their work, this was used to formulate a correction to the Fisher combination test yielding a valid testing procedure using conformal p-values for the special case of $\pi_{\rm th} = 0$. In the following theorem, we adapt their result to the setting of this paper, thereby paving the way for constructing more general test statistics which are valid asymptotically.
\begin{theorem}\label{thm:appendix_asymp_test}
    Let $T_G=\sum_{i=1}^m G(\hat{p}_i)$ be a test statistic for conformal p-values $\hat{p}_1,\dots,\hat{p}_m$ and an increasing function $G: [0, 1] \to [0, \infty)$ satisfying
    \begin{enumerate}
        \item[(i)] $\int_0^1 G^{2+\eta}(u)du < \infty$;
        \item[(ii)] $|\frac{1}{n+1} \sum_{j=1}^{n+1} G^k(j/(n+1)) - \int_0^1 G^k(u)du| = o(1/\sqrt{n})$, for $k \in \{1, 2\}$;
        \item[(iii)] $\max_{j\in\{1,\dots,n+1\}}~G(j/(n+1)) = o(\sqrt{n})$.
    \end{enumerate}
    Then, under the null hypothesis $H_0 : \pi \leq \pi_{\rm th}$, $\pi_{\rm th} \in [0, 1)$, if $m = \lfloor\gamma n\rfloor$ for some $\gamma > 0$, as $n \to \infty$
    \begin{equation}\label{eq:upper_bound_asymptotic}
        \begin{split}
            \mathbb{P}_{H_0}(T_G \leq r) &\leq \pi_{\rm th}^{m} + \sum_{k = 1}^m {\rm B}_{\pi_{\rm th}}^{m}(k) F_{G^k}\Bigg(\frac{r + k(\sqrt{1+\gamma_k} - 1)\int_0^1 G(u){\rm d}u}{\sqrt{1+\gamma_k}}\Bigg),
        \end{split}
    \end{equation}
    where $F_{G^k}$ is the \gls{cdf} of $\sum_{i=1}^k G(U_i)$, $U_i \stackrel{iid.}{\sim} {\rm Unif}([0, 1])$, and $\gamma_k = k/n$.
    Moreover, if $\mathbb{P}(G(\hat{p}_i) = 0, \forall i \in \bar{\mathcal{H}}_1) = 1$ and $\pi = \pi_{\rm th}$ the inequality is exact.
\end{theorem}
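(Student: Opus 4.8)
The plan is to mirror the proof of Theorem~\ref{thm:upper_bound}, replacing the exact order-statistic computation by the asymptotic result of \cite{Bates2023:Testing} (their Theorem~S1). First I would condition on the number of inliers in the test sample. Writing $\bar{\mathcal{H}}_0=[k]$ for the event that there are exactly $k$ inliers (which, under the contamination model with factor $\pi_0$, happens with probability ${\rm B}_{\pi_0}^{m}(k)=\binom{m}{k}(1-\pi_0)^k\pi_0^{m-k}$), the exchangeability of the inliers conditional on the outliers gives, for the true $\pi=\pi_0\in[0,\pi_{\rm th}]$,
\[
\mathbb{P}_{\pi_0}(T_G\le r)=\sum_{k=0}^m {\rm B}_{\pi_0}^{m}(k)\,\mathbb{P}\Big(\textstyle\sum_{i\in\bar{\mathcal{H}}_0}G(\hat{p}_i)+\sum_{i\in\bar{\mathcal{H}}_1}G(\hat{p}_i)\le r\;\big|\;\bar{\mathcal{H}}_0=[k]\Big).
\]
Since $G\ge 0$, the outlier contribution $\sum_{i\in\bar{\mathcal{H}}_1}G(\hat{p}_i)$ may be dropped for an upper bound, with equality precisely when $G(\hat{p}_i)=0$ almost surely for every $i\in\bar{\mathcal{H}}_1$; the $k=0$ summand is bounded by $1$ and contributes $\pi_0^m$. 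As in the proof of Theorem~\ref{thm:upper_bound}, the conditional probability $\mathbb{P}(\sum_{i=1}^k G(\hat{p}_i)\le r\mid\bar{\mathcal{H}}_0=[k])$ is non-increasing in $k$ (adding non-negative terms) and is free of $\pi_0$, while the partial sums $\sum_{k=0}^l {\rm B}_{\pi_0}^{m}(k)$ are non-decreasing in $\pi_0$; hence the supremum over $\pi_0\in[0,\pi_{\rm th}]$ is attained at $\pi_0=\pi_{\rm th}$, reducing the composite null $\pi\le\pi_{\rm th}$ to $\pi=\pi_{\rm th}$.

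It then remains, for each fixed $k\ge 1$, to control $\mathbb{P}(\sum_{i=1}^k G(\hat{p}_i)\le r\mid\bar{\mathcal{H}}_0=[k])$, i.e.\ the law of $T_G$ built from $k$ genuine conformal p-values of inliers calibrated against a null sample of size $n$. This is exactly the setting of Theorem~S1 of \cite{Bates2023:Testing}: conditions (i)--(iii) are the moment bound, the Riemann-sum approximation rate, and the negligibility condition under which, as $n\to\infty$ with $m=\lfloor\gamma n\rfloor$, the statistic $\sum_{i=1}^k G(\hat{p}_i)$ is asymptotically distributed like $k\mu+\sqrt{1+\gamma_k}\,\big(\sum_{i=1}^k G(U_i)-k\mu\big)$, where $\mu=\int_0^1 G(u)\,{\rm d}u$, $\gamma_k=k/n$, and $U_i\stackrel{iid.}{\sim}{\rm Unif}([0,1])$ --- the factor $\sqrt{1+\gamma_k}$ being the variance inflation caused by the shared, randomly drawn calibration scores. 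Solving $k\mu+\sqrt{1+\gamma_k}(s-k\mu)\le r$ for $s$ yields the rescaled threshold $\big(r+k(\sqrt{1+\gamma_k}-1)\mu\big)/\sqrt{1+\gamma_k}$, so the limiting value of the conditional probability is $F_{G^k}\!\big((r+k(\sqrt{1+\gamma_k}-1)\int_0^1 G(u)\,{\rm d}u)/\sqrt{1+\gamma_k}\big)$.

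Combining the two steps --- taking $n\to\infty$ in the binomial decomposition evaluated at $\pi_0=\pi_{\rm th}$, with the $k=0$ term giving $\pi_{\rm th}^m$ --- produces the stated bound \eqref{eq:upper_bound_asymptotic}, and when $G(\hat{p}_i)=0$ a.s.\ on $\bar{\mathcal{H}}_1$ and $\pi=\pi_{\rm th}$ every inequality used becomes an equality (exactly, or in the limit), giving the exactness claim. The main obstacle I anticipate is the limit interchange: the sum over $k$ depends on $n$ both through its length $m=\lfloor\gamma n\rfloor$ and through $\gamma_k=k/n$, so one must argue that Theorem~S1 applies with enough uniformity over the range of $k$ carrying mass. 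The natural way to close this is to note that the weights ${\rm B}_{\pi_{\rm th}}^{m}(k)$ concentrate around $k\approx(1-\pi_{\rm th})m$, keeping $\gamma_k$ in a compact subset of $(0,\infty)$ and rendering the small-$k$ (non-CLT) terms asymptotically negligible, and to check carefully that conditions (i)--(iii) are precisely the hypotheses under which Theorem~S1 of \cite{Bates2023:Testing} is valid for this family of statistics.
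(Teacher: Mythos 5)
Your proposal follows essentially the same route as the paper's proof: the same law-of-total-probability decomposition over the number of inliers, the same dropping of the non-negative outlier contributions with the supremum over $\pi_0\in[0,\pi_{\rm th}]$ attained at $\pi_{\rm th}$, and the same invocation of Theorem~S1 of \cite{Bates2023:Testing} followed by inversion of the rescaled quantile. The limit-interchange issue you flag is real and is the one point the paper treats only tersely (it appeals to $q_k^*$ being a decreasing sequence), so your proposed concentration argument for the binomial weights is a reasonable, if not strictly more detailed than necessary, way to close the same gap.
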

\begin{proof}
    By the law of total probability
    \begin{align}
        \mathbb{P}_{H_0}\Big(\sum_{i=1}^m G(\hat{p}_i) \leq r\Big) &= \sum_{k = 0}^m \mathbb{P}_{H_0}(k) \mathbb{P}\Big(\sum_{i=1}^{k} G(\hat{p}_i) + \sum_{i=k+1}^m G(\hat{p}_i) \leq r~\big|~\bar{\mathcal{H}}_0 = [k]\Big) \nonumber\\
        &\leq \sup_{\pi_0 \in [0, \pi_{\rm th}]} \sum_{k = 0}^m {\rm B}_{\pi_0}^{m}(k) \mathbb{P}\Big(\sum_{i=1}^{k} G(\hat{p}_i) \leq r~\big|~\bar{\mathcal{H}}_0 = [k]\Big),\label{eq:fisher_ineq}
    \end{align}
    Now, by Theorem S1 of \cite{Bates2023:Testing}, when $k = \lfloor \gamma_k n\rfloor$ we have that as $n \to \infty$
    \begin{equation*}
        \mathbb{P}\Big(\sum_{i=1}^{k} G(\hat{p}_i) \leq r~\big|~\bar{\mathcal{H}}_0 = [k]\Big) \to q_k^*,
    \end{equation*}
    where $q_k^*$ is such that $r = \sqrt{1+\gamma_k} Q_{G^k}(q_k^*) - k(\sqrt{1+\gamma_k}-1)\int_0^1 G(u){\rm d}u$, and $Q_{G^k}(q_k^*)$ is the $q_k^*$ quantile of $\sum_{i=1}^k G(U_i)$. Isolating $q_k^*$ yields
    \begin{equation*}
        q_k^* = F_{G^k}\Bigg(\frac{r + k(\sqrt{1+\gamma_k} - 1)\int_0^1 G(u){\rm d}u}{\sqrt{1+\gamma_k}}\Bigg).
    \end{equation*}
    It follows that
    \begin{align*}
        \lim_{n\to\infty} \mathbb{P}_{H_0}\Big(\sum_{i=1}^m G(\hat{p}_i) \leq r\Big)
        %&\leq \sup_{\pi_0 \in [0, \pi_{\rm th}]} \Bigg(\sum_{k = 1}^m {\rm B}_{\pi_0}^{m}(k) F_{G^k}\Bigg(\frac{r + k(\sqrt{1+\gamma_k} - 1)\int_0^1 G(u){\rm d}u}{\sqrt{1+\gamma_k}}\Bigg).\\
        %&\qquad\qquad\qquad + {\rm B}_{\pi_0}^{m}(0)\Bigg)\\
        &\leq \pi_{\rm th}^{m} + \sum_{k = 1}^m {\rm B}_{\pi_{\rm th}}^{m}(k) F_{G^k}\Bigg(\frac{r + k(\sqrt{1+\gamma_k} - 1)\int_0^1 G(u){\rm d}u}{\sqrt{1+\gamma_k}}\Bigg),
    \end{align*}
    where we use that $q_k^*$ is a decreasing sequence.
    % \mv{Actually, we can only use the asymptotic regime when $k \geq k_0$ for some $k_0 = \lfloor \gamma_{k_0} n\rfloor$. The result will still work when ${\rm B}(k, \pi_0) \approx 0$ for $k < k_0$. When this is not the case, one may approximate $F_{G^k}$ numerically?}
\end{proof}
It follows from the theorem that an asymptotically valid p-value is
\begin{equation}
    \hat{u}_G = \pi_{\rm th}^{m} + \sum_{k = 1}^m {\rm B}_{\pi_{\rm th}}^{m}(k) F_{G^k}\Bigg(\frac{T_G + k(\sqrt{1+\gamma_k} - 1)\int_0^1 G(u){\rm d}u}{\sqrt{1+\gamma_k}}\Bigg).
\end{equation}
\begin{remark}
    We have altered the formulation slightly compared to \cite{Bates2023:Testing} since we want the function $G$ to map into non-negative numbers and being an increasing function such that a small value of the test statistic is evidence against the null, thereby allowing the inequality of Eq.~\eqref{eq:fisher_ineq}. Moreover, we want $G$ to be such that $\mathbb{P}(G(\hat{p}_i) \leq x, \forall i \in \bar{\mathcal{H}}_1)$ is large for small $x$, which is an important property to have a tight approximation in Eq.~\eqref{eq:fisher_ineq}.
\end{remark}
% \begin{remark}
%     A variant of the Fisher combination test uses $G(p) = -2~{\rm log}\big(\frac{n+2}{n+1} - p\big)$ for which as $n \to \infty$, $\int_0^1 G(u){\rm d}u \to 2$ and $G(U) \stackrel{d}{\to} \chi^2(2)$, $U \sim {\rm Unif}([0,1])$, resulting in $\sum_{i=1}^k G(U_i) \stackrel{d}{\to} \chi^2(2k)$, $U_i \stackrel{iid.}{\sim} {\rm Unif}([0, 1])$. This $G$-function satisfies the important condition of being increasing, however, it loses some of the power of the Fisher combination test. Intuitively Fisher's idea was to assign very large values to very small p-values as this is strong evidence against the null, but such behavior is lost in our variant presented here. The technical conditions (i)-(iii) regarding the function $G$ can be verified, see Remark S1 of \cite{Bates2023:Testing} for details.
% \end{remark}
\begin{remark}
    A variant of the Fisher combination test uses $G(\hat{p}) = -2{\rm log}\big(\frac{1}{n+1}\big) + 2{\rm log}(\hat{p})$, thereby maintaining the shape of the typical Fisher combination function, while making it an increasing function. The result of \cite{Bates2023:Testing} can be exploited in this case by noticing that
    \begin{equation*}
        \mathbb{P}\Big(-2k{\rm log}\Big(\frac{1}{n+1}\Big) + 2 \sum_{i=1}^k{\rm log}(\hat{p}_i) \leq r\Big) = 1 - \mathbb{P}\Big(-2\sum_{i=1}^k {\rm log}(\hat{p}_i) \leq -2k{\rm log}\Big(\frac{1}{n+1}\Big) - r\Big).
    \end{equation*}
    Particularly, it holds that using this test statistic is equivalent to using the typical Fisher test statistic when $\pi_{\rm th} = 0$. The technical conditions (i)-(iii) regarding the function $G$ can be verified, see Remark S1 of \cite{Bates2023:Testing} for details.
\end{remark}
\begin{remark}
    A Sum combination test uses $G(\hat{p}) = \hat{p}$ for which $\int_0^1 G(u) {\rm d}u = 1/2$ and $G(U) \stackrel{d}{=} {\rm Unif}([0, 1])$, resulting in $\sum_{i=1}^k G(U_i) \stackrel{d}{=} {\rm IH}(k)$ where $U_i \stackrel{iid.}{\sim} {\rm Unif}([0, 1])$ and ${\rm IH}(k)$ is the Irwin-Hall distribution with parameter $k$. The technical conditions (i)-(iii) regarding the function $G$ can be verified: condition (i) trivially holds since a polynomial of finite order is finite; for condition (ii) we note that $G(u)$ is increasing and $G'(u) = 1$, thus for $k \in \{1, 2\}$
    \begin{align*}
        \Big|\frac{1}{n+1} \sum_{j=1}^{n+1} \frac{j^k}{(n+1)^k} - \int_0^1 u^k {\rm d}u\Big| &\leq \sum_{j=1}^{n+1} \Big|\frac{1}{n+1}\frac{j^k}{(n+1)^k} - \int_{(j-1)/(n+1)}^{j/(n+1)} u^k {\rm d}u\Big|\\
        &\leq \sum_{j=1}^{n+1} \int_{(j-1)/(n+1)}^{j/(n+1)} \Big|\frac{j^k}{(n+1)^{k+1}} - u^k\Big|{\rm d}u\\
        &\leq \sum_{j=1}^{n+1} \frac{kj^{k-1}}{(n+1)^{k+1}} = {\rm O}\Big(\frac{1}{n}\Big)
    \end{align*}
    where the first two inequalities are due to the triangle inequality, and the third inequality follows by the mean value theorem and the chain rule; finally condition (iii) immediately holds as $\max_{j\in\{1,\dots,n+1\}}~G(j/(n+1)) = 1$.
\end{remark}

\subsection{Multiple testing with combination test p-values}
For completeness of presentation we begin by defining the \gls{prds} property \citep{Benjamini2001:Dependency}.
% %
\begin{definition}
    A set $\mathcal{D}$ is called non-decreasing if $x \in \mathcal{D}$ and $x \leq y$ implies $y \in \mathcal{D}$.
\end{definition}
\begin{definition}
    For any non-decreasing set $\mathcal{D}$, and for each $i \in I_0$ such that $\mathbb{P}(X \in \mathcal{D} | X_i = x)$ is non-decreasing in $x$, then $X$ is \gls{prds} on $I_0$.
\end{definition}
Theorem~\ref{thm:Storey_PRDS} states that the Storey conformal data contamination p-values are \gls{prds}. This is a result motivated by Theorem 2.4 of \cite{Bates2023:Testing} stating that conformal p-values are \gls{prds}. The implication of Theorem~\ref{thm:Storey_PRDS} is that we can use the \gls{bh} procedure with the conformal data contamination p-values and maintain \gls{fdr} control.
%
% \begin{theorem}\label{thm:Storey_PRDS_supp}
%     Let $\hat{p}_1^{\rm storey}, \dots, \hat{p}_K^{\rm storey}$ be $K$ Storey conformal data contamination p-values as in \eqref{eq:storey_p_value}, with the $k$-th derived from conformal p-values $\hat{p}_{k, 1}, \dots, \hat{p}_{k, m_k}$. The Storey conformal data contamination p-values are \gls{prds}.
% \end{theorem}
\begin{proof}[Proof of Theorem \ref{thm:Storey_PRDS}]
    % Let $Z = (\hat{s}_{(1)}, \dots, \hat{s}_{(n)})$ be the order statistics of the conformal scores on the calibration data.
    Let $F$ be a shorthand for the function in \eqref{eq:storey_p_value}, i.e., $\hat{u}_k^{\rm storey} = F(\hat{p}_{1}^k, \dots, \hat{p}_{m_k}^k)$. It follows that $F$ is entry-wise monotone increasing (not strictly), i.e., $F\big(\hat{p}_{1}^k, \dots, (\hat{p}_{j}^k)', \dots, \hat{p}_{m_k}^k\big) \geq F\big(\hat{p}_{1}^k, \dots, \hat{p}_{j}^k, \dots, \hat{p}_{m_k}^k\big)$ where $(\hat{p}_{j}^k)' \geq \hat{p}_{j}^k$ for any $j\in\{1,\dots,m_k\}$. Let $Y = \big(\hat{u}_1^{\rm storey}, \dots, \hat{u}_K^{\rm storey}\big)$ be the Storey conformal data contamination p-values on the test sets. Denote by $X_k = \big(\hat{p}_{1}^k, \dots, \hat{p}_{m_k}^k\big)$ the conformal p-values on the $k$-th test set, and by $X = (X_1, \dots, X_k)$ the total conformal p-values on the test sets.

    Let $y \geq y'$ and let $A$ be an increasing set. Conditioning on the $k$-th conformal data contamination p-value and using the law of total probability
    \begin{equation*}
        \mathbb{P}(Y \in A | Y_k = y) = \sum_{x \in \mathcal{X}} \frac{\mathbb{P}(Y \in A, Y_k = y, X_k=x)}{\mathbb{P}(Y_k = y)},
    \end{equation*}
    where $\mathcal{X} = \{1/(n+1),\dots, 1\}^{m}$. Now, by conditional independence of $Y_{-k}=(Y_1,\dots,Y_{k-1},Y_{k+1},\dots,Y_K)$ and $Y_k$
    \begin{equation*}
        \mathbb{P}(Y \in A | Y_k = y) = \sum_{x \in \mathcal{X}} \mathbb{P}(Y \in A | X_k=x) \mathbb{P}(X_k = x | Y_k = y).
    \end{equation*}
    We know that $\mathbb{P}(Y_k | X_k=x) = \mathbbm{1}[Y_k = F(x)]$, and so $\mathbb{P}(X_k | Y_k = y)$ is only non-zero when $y = F(x)$. Define $S(y) = \{x \in \mathcal{X} : y = F(x)\}$.
    \begin{equation*}
        \mathbb{P}(Y \in A | Y_k = y) = \frac{\sum_{x \in S(y)} \mathbb{P}(X \in B | X_k=x)}{|S(y)|},
    \end{equation*}
    where $B = \{\bar{x}\in \mathcal{X}^K : [F(\bar{x}_1), \dots, F(\bar{x}_K)] \in A\}$. Since $A$ is an increasing set, and $F$ is an increasing function, $B$ is also an increasing set.
    By Theorem 2.4 of \cite{Bates2023:Testing}, conformal p-values are \gls{prds}, and as a corollary to this, we have that $\mathbb{P}(X \in B | X_k = x) \geq \mathbb{P}(X \in B | X_k = x')$ where $x \succeq x'$ ($\succeq$ denotes entry-wise inequality).

    We define some notations: let $S^{\rm vec}(y)$ denote a vectorization of the set $S(y)$ such that $S^{\rm vec}(y) = (\Tilde{x}^{S(y)}_{1}, \dots, \Tilde{x}^{S(y)}_{|S(y)|})$ with $\Tilde{x}^{S(y)}_i \in S(y)$, $\Tilde{x}^{S(y)}_i \neq \Tilde{x}^{S(y)}_j$, $i\neq j$, $i, j \in [|S(y))|]$; let also $\bar{S}^{\rm vec}(y)$ denote the vectorization of $\bar{S}(y) \subset S(y)$; finally let $S^{\rm c}(y)$ denote the complement of $S(y)$.

    Assume initially that $|S(y')| = |S(y)|$. Then, $S(y)$ dominates $S(y')$ since $F$ is an entry-wise increasing function, i.e., there exists a permutation $\sigma$ on $[|S(y))|]$ such that $(\Tilde{x}^{S(y)}_{\sigma(1)}, \dots, \Tilde{x}^{S(y)}_{\sigma(|S(y)|)}) \succeq (\Tilde{x}^{S(y')}_{1}, \dots, \Tilde{x}^{S(y')}_{|S(y)|})$, and hence,
    \begin{equation*}
        \mathbb{P}(Y \in A | Y_k = y) \geq \frac{\sum_{x' \in S(y')} \mathbb{P}(X \in B | X_k=x')}{|S(y')|} = \mathbb{P}(Y \in A | Y_k = y').
    \end{equation*}

    Now for the case $|S(y)| > |S(y')|$. Let $\bar{S}(y)$ be the subset of $S(y)$ with cardinality $|S(y')|$ which dominates $S(y')$ while minimizing $\sum_{x \in \bar{S}(y)} \mathbb{P}(X \in B | X_k = x)$, i.e., there exists a permutation $\sigma$ on $[|S(y')|]$ such that $(\Tilde{x}^{\bar{S}(y)}_{\sigma(1)}, \dots, \Tilde{x}^{\bar{S}(y)}_{\sigma(|S(y')|)}) \succeq (\Tilde{x}^{S(y')}_{1}, \dots, \Tilde{x}^{S(y')}_{|S(y')|})$, and hence,
    % ${\rm perm}[{\rm subset}[{\rm vec}[S(y)]]] \succeq {\rm vec}[S(y')]$ such that $\bar{S}(y)$ is the subset of $S(y)$ dominating $S(y')$ while minimizing $\sum_{x \in \bar{S}(y)} \mathbb{P}(X \in B | X_k = x)$. Then,
    \begin{align*}
        \mathbb{P}(Y \in A | Y_k = y) &= \frac{\sum_{x \in \bar{S}(y)} \mathbb{P}(X \in B | X_k=x) + \sum_{x \in \bar{S}^c(y)} \mathbb{P}(X \in B | X_k=x)}{|S(y')| + |\bar{S}^c(y)|}\\
        &\geq \frac{\sum_{x \in \bar{S}(y)} \mathbb{P}(X \in B | X_k=x)}{|S(y')|}\\
        & \geq \frac{\sum_{x' \in S(y')} \mathbb{P}(X \in B | X_k=x)}{|S(y')|}\\
        &= \mathbb{P}(Y \in A | Y_k = y').
    \end{align*}

    Finally for the case  $|S(y')| > |S(y)|$. Let $\bar{S}(y')$ be the subset of $S(y')$ with cardinality $S(y)$ which is dominated by $S(y)$ while maximizing $\sum_{x' \in \bar{S}(y')} \mathbb{P}(X \in B | X_k = x')$, i.e., there exists a permutation $\sigma$ on $[|S(y)|]$ such that $(\Tilde{x}^{\bar{S}(y')}_{\sigma(1)}, \dots, \Tilde{x}^{\bar{S}(y')}_{\sigma(|S(y)|)}) \preceq (\Tilde{x}^{S(y)}_{1}, \dots, \Tilde{x}^{S(y)}_{|S(y)|})$.
    % ${\rm perm}[{\rm subset}[{\rm vec}[S(y')]]] \preceq {\rm vec}[S(y)]$ such that $\bar{S}(y')$ is the subset of $S(y')$ dominated by $S(y)$ while maximizing $\sum_{x' \in \bar{S}(y')} \mathbb{P}(X \in B | X_k = x')$.
    Now observe that
    \begin{align*}
        \mathbb{P}(Y \in A | Y_k = y) &= \frac{\sum_{x \in S(y)} \mathbb{P}(X \in B | X_k=x)}{|S(y)|}\\
        &\geq \frac{\sum_{x' \in \bar{S}(y')} \mathbb{P}(X \in B | X_k=x')}{|S(y)|}\\
        &\geq \frac{\sum_{x' \in \bar{S}(y')} \mathbb{P}(X \in B | X_k=x') + \sum_{x' \in \bar{S}^c(y')} \mathbb{P}(X \in B | X_k=x')}{|S(y')|}\\
        &= \mathbb{P}(Y \in A | Y_k = y').
    \end{align*}
    % Now, since $\bar{S}(y')$ contains the highest probabilities, we have for $x^{\rm c} \in \bar{S}^{\rm c}(y')$ that
    % \begin{equation}
    %     \mathbb{P}(X \in B | X_k = x^{\rm c}) \leq \frac{\sum_{x' \in \bar{S}(y')} \mathbb{P}(X \in B | X_k = x')}{|S(y)|},
    % \end{equation}
    % i.e., the probabilities of $x^{\rm c} \in \bar{S}^{\rm c}(y')$ is less than the average among the highest probabilities.
    The existence of such $\bar{S}(y)$ and $\bar{S}(y')$ is confirmed for the Storey conformal data contamination p-value. Consider initially that the Storey test statistic, denoted by $T$ here, is $T = t+1$ and $T' = t$, i.e., just a difference of one. Now, $S(y) = \{x \in \mathcal{X} : x_{(1)} \leq \cdots \leq x_{(K-t-1)} \leq \lambda < x_{(K-t)} \leq \cdots \leq x_{(K)}\}$, and $S(y') = \{x' \in \mathcal{X} : x'_{(1)} \leq \cdots \leq x'_{(K-t)} \leq \lambda < x'_{(K-t+1)} \leq \cdots \leq x'_{(K)}\}$, and so the only difference occurs at $x_{(K-t)}$ which for $y$ is greater than or equal to $\lambda$ while for $y'$ it is less than or equal to $\lambda$. Now, for the case of $|S(y)| > |S(y')|$, for any $x' \in S(y')$ we can find an $x \in S(y)$ such that $x \succeq x'$. Taking for each $x' \in S(y')$ a unique $x \in S(y)$ with smallest $\mathbb{P}(X \in B | X_k = x)$ (avoiding using the same $x$ more than once), we have constructed $\bar{S}(y)$. In the same way we can construct $\bar{S}(y')$, taking for each $x \in S(y)$ a unique $x' \in S(y')$ with the largest $\mathbb{P}(X \in B | X_k = x')$. Extending the argument to differences in the Storey test statistic of more than one is immediate.
    % From this the result follows.
\end{proof}
We have only proven the \gls{prds} property for the Storey conformal data contamination p-values in Theorem \ref{thm:Storey_PRDS}, however, we expect the result can also be shown for the other conformal data contamination p-values.

\section{Additional discussion of the proposed data sharing procedure}\label{sec:additional_details}
In this section, we provide additional discussions of the proposed data sharing procedure. We summarize the data sharing procedure described in Section~\ref{sec:procedure} in Procedure~\ref{alg:procedure}. We present an overview of the scenario variables and hyperparameters with some important interpretations in \ref{sec:interpretations}. The limitations of the data sharing procedure is discussed in \ref{subsec:limitations}.

\setlength{\textfloatsep}{\baselineskip}% Remove \textfloatsep
\begin{algorithm}
     \caption{}
     \label{alg:procedure}
     \begin{algorithmic}[1]
     \Statex Input: local data $\{Z_i\}_{i=1}^\ell$ and $\{Z_i\}_{i=\ell+1}^n$, conformal score method $\hat{s}$, model class $f$, incoming data per round per data agent $m$, collaboration budget $K_{\rm budget}$ (or contamination threshold $\pi_{\rm th}$ and significance level $\alpha$), conformal non-contamination statistic $T$.
        \State Fit conformal score $\hat{s}(\cdot, (Z_1, \dots, Z_{\ell}))$, and compute $\hat{s}_{\ell+1},\dots,\hat{s}_n$.
        \State Receive data (round 1) from other data agents $\{Z_{i}^{k}\}_{i=1}^{m}$, $k\in[K]$.
        \State For each $k\in[K]$, compute conformal scores on the test data, $\hat{s}_1^k,\dots,\hat{s}_{m}^k$, and subsequently conformal p-values, $\hat{p}_1^k,\dots, \hat{p}_{m}^k$, using \eqref{eq:conformal_pvalue}.
        \State Evaluate conformal non-contamination statistics $T_k = T(\hat{p}_1^k, \dots,  \hat{p}_m^k)$ and find the corresponding conformal data contamination p-values, $\hat{u}_k$ (see Section~\ref{sec:results}).
        \If {Given a fixed collaboration budget}
            % \State Select the $K_{\rm budget}$ data agents with the largest $T_k$ for collaboration in the following round,
            % \Statex\hspace{\algorithmicindent}giving a subset $\hat{\mathcal{H}}_0\subseteq [K]$, $|\hat{\mathcal{H}}_0|=K_{\rm budget}$, and estimate the \gls{fdr} for null hypothesis
            % \Statex\hspace{\algorithmicindent}$H_0^k : \pi_k \leq \pi_{\rm th}$ for each/any $\pi_{\rm th}$ of interest with $\hat{u}_k$, $k \in [K]$.
            \State Select for collaboration in the following round data agents in $\hat{\mathcal{H}}_0 = \{\sigma(i) : i\in[K_{\rm budget}]\}$,
            \Statex\hspace{\algorithmicindent}where $\sigma$ is a permutation on $[K]$ such that $T_{\sigma(1)} \geq T_{\sigma(2)} \geq \cdots \geq T_{\sigma(K)}$, and estimate the
            \Statex\hspace{\algorithmicindent}\gls{fdr} for null hypotheses $H_0^k : \pi_k \leq \pi_{\rm th}$ for a $\pi_{\rm th}$ of interest with $\hat{u}_k$, $k \in [K]$, using \eqref{eq:FDR_est}.
        \Else
            \State Collaborate in the following round with data agents in $\hat{\mathcal{H}}_0 = [K] \setminus {\rm SBH}_{\alpha, \gamma}(\hat{u}_1,  \dots,  \hat{u}_K)$.
        \EndIf
        \State Receive data (round $2,\dots$) from other data agents $\{Z_{m+i}^{k}\}_{i=1}^{m}$, $k\in\hat{\mathcal{H}}_0$.
        % \State Aggregate all the received data from other data agents, giving a total of $M=mK+m|\hat{\mathcal{H}}_0|$ datapoints, denoted $Z_i^{\rm agg}$ for $i\in[M]$.
        % \State Select for subsequent model training datapoints in $\hat{\bar{\mathcal{H}}}_0 = [M] \setminus {\rm SBH}_{\beta, \zeta}(\hat{p}_i^{\rm agg}, i\in[M])$.
        \State Run data subset selection on all the received data.
        % \State Use all local data $\{Z_i\}_{i=1}^n$ together with the data of $\hat{\bar{\mathcal{H}}}_0$ to train the model, yielding $f^*$.
        \State Use all local data $\{Z_i\}_{i=1}^n$ together with the selected data to train the model, yielding $f^*$.
        % \State Return $f^*$.
        \Statex Output: optimized local model, $f^*$.
     \end{algorithmic}
\end{algorithm}

\subsection{Interpretations of scenario variables and hyperparameters}\label{sec:interpretations}
We outline in Tables~\ref{tab:scenario_params} and \ref{tab:hyperparams} all scenario variables and hyperparameters. Here we recall the notation for each variable, and discuss how it influences the collaborative data sharing method proposed in this work.

\begin{table}[t]
    \centering
    \caption{Scenario variables}\label{tab:scenario_params}
    \begin{tabularx}{\textwidth}{lX}
        \toprule
        \textbf{Parameter}  & \textbf{Description \& Influence} \\\midrule
        $n$
        & The size of the null sample.
        As $n-\ell$ increases, the empirical distribution better approximates the true distribution, yielding more accurate conformal p-values. Moreover, this variable controls the smallest possible p-value, thereby controlling how much evidence one outlier can yield towards rejection in the conformal data contamination tests. \\
        $m$
        & The number of test data points.
        As this variable increases, more evidence against the null can be aggregated yielding a more powerful conformal data contamination test. \\
        $\pi$
        & The true contamination factor. As the difference $\pi - \pi_{\rm th}$ increases, the power of the conformal data contamination tests increases. \\
        $K$
        & The number of other data agents. \\
        $K_0$
        & The number of other data agents satisfying the null hypothesis $H_0^k : \pi_k \leq \pi_{\rm th}$. As this grows, the power of the multiple testing procedure decreases. \\
        \bottomrule
    \end{tabularx}
\end{table}

\begin{table}[t]
    \centering
    \caption{Hyperparameters}\label{tab:hyperparams}
    \begin{tabularx}{\textwidth}{lX}
        \toprule
        $\ell$
        & The number of null data points used for learning the conformal score.
        As this variable increases the conformal score will tend to better separate outliers from inliers thereby improving outlier detection and increasing power of the conformal data contamination tests. \\
        $\hat{s}$
        & The conformal score function. Choosing a good conformal score is key to achieving good separation between outliers and inliers, which also directly impacts the power of the conformal data contamination tests.\\
        $K_{\rm budget}$
        & The collaboration budget, i.e., the number of other data agents to communicate with in the second round (in case a fixed collaboration budget is used). \\
        $\pi_{\rm th}$
        & The chosen threshold on the contamination factor defining the null hypotheses. This hyperparameter controls the amount of contamination that is tolerated. \\
        $\alpha$
        & The significance level for the conformal data contamination tests. This hyperparameter controls the amount of evidence needed before concluding that the data from another agent is more contaminated than the threshold $\pi_{\rm th}$ allows. \\
        $\gamma$
        & The hyperparameter in Storey's \gls{bh} procedure. Controls the bias-variance trade-off for estimating $K_0$. \\
        % $\beta$
        % & The significance level in the conformal outlier detection. This hyperparameter controls how much evidence is needed to conclude that a data point is an outlier. As this increases, the number of data points which is subsequently used for model fitting increases, with the risk of admitting more outliers. \\
        % $\zeta$
        % & The hyperparameter in Storey's \gls{bh} procedure (conformal outlier detection). Controls the bias-variance trade-off for estimating $M_0$, i.e., the number of inliers among the aggregated data set of total size $M$.\\
        $\lambda$
        & The hyperparameter in the Storey test statistic.\\
        $i_0$
        & The hyperparameter in the Quantile test statistic. \\
        \bottomrule
    \end{tabularx}
\end{table}

\subsection{Limitations}\label{subsec:limitations}
The proposed data sharing procedure has some limitations. First, the novel conformal data contamination tests rely on conformal p-values, and herein are some assumptions as briefly mentioned in Section~\ref{subsec:conformal_outlier_detection}. Namely, we require exchangeability of the calibration and test data. This limits the applicability to for instance time series data. Conformal prediction has been studied beyond the exchangeability assumption: the downside of losing exchangeability is a coverage gap, which notably can be lessened through weighting \citep{Barber2023:Conformal}. Second, we did not consider the aspect of data subset selection in this work. This was a deliberate choice to focus on the novel conformal data contamination tests, however, in a practical procedure we recommend to include some data subset selection \citep{Ghorbani2019:Shapley}, or data weighting \citep{Ding2022:Collaborative}.

\section{Additional numerical experiments: Gaussian data}\label{sec:Gaussian_data}

% \subsection{Hyperparameter interpretations}\label{subsec:hyperparameter}
% In this section, a numerical study of the proposed p-values is considered with a particular focus on the influence of the hyperparameters $\lambda$ and $i_0$, respectively.
In this section, we present a simulation study with null distribution $P_0 \equiv \mathcal{N}(\bs{0}_{2}, \bs{I}_{2\times 2})$ and alternative $P_1 \equiv \mathcal{N}(\mu_1 \bs{1}_{2}, \bs{I}_{2 \times 2})$. As conformal score we use $\hat{s}(X) = -\Vert X \Vert$ as it is a natural choice in light of the distribution $P_0$.
% We simulate $\ell = 200$ data points from $P_0$ and use these to fit a one-class support vector machine which is subsequently used to get the conformal scores on the
We simulate $n=200$ data points from $P_0$ to be the calibration data set and let the test data consist of $m=50$ data points. The test data on average consists of $m(1-\pi)$ data points sampled from $P_0$ while the remaining data points are sampled from $P_1$.

\subsection{Hyperparameters of the conformal data contamination tests}
% We compare, in Figure~\ref{fig:simple_comparison}, the type I error and the power estimated using $5000$ simulations.
% % of the same example considered in Section~\ref{subsec:hyperparameter}.
% We notice from Figure~\ref{subfig:typeIerror_comparison} that all the tests are conservative (only a few exceptions caused by variance in the estimation of the type I error) and for appropriate choices of hyperparameters the tests can be nearly exact. The power plot in Figure~\ref{subfig:power_comparison} shows that the Storey test and the quantile test has the highest power for appropriately chosen hyperparameters, but meanwhile the Fisher test has decent performance without relying on a hyperparameter.
% 
% \begin{figure}
%     \centering
%     \begin{minipage}{0.49\linewidth}
%         \centering
%         \subfloat[Type I error $(\pi=0.5, \pi_{\rm th}=0.5)$.]{\includegraphics[width=1\linewidth]{figures/TestStatistic_Hyper/comparison_mu5_n200_m50_pi0.5_pith0.5_alpha0.05.png}\label{subfig:typeIerror_comparison}}
%     \end{minipage}~
%     \begin{minipage}{0.49\linewidth}
%         \centering
%         \subfloat[Power $(\pi=0.7, \pi_{\rm th}=0.5)$.]{\includegraphics[width=1\linewidth]{figures/TestStatistic_Hyper/comparison_mu5_n200_m50_pi0.7_pith0.5_alpha0.05.png}\label{subfig:power_comparison}}
%     \end{minipage}
%     \caption{Comparison of type I error and power estimated of the different tests using $10000$ simulations when $\alpha = 0.05$, $\mu_1 = 5$, $n=200$, and $m=50$.}
%     \label{fig:simple_comparison}
% \end{figure}
%
We compare, in Figure~\ref{fig:simple_power_comparison}, the power estimated using $10000$ simulations when $\alpha = 0.05$, $\mu_1 = 4$, $\pi=0.7$, $\pi_{\rm th}=0.5$, $n=200$, and $m=50$, depending on the hyperparameters of the conformal data contamination tests, for the two cases $\mu_1 = 2$ and $\mu_1 = 4$. We can observe that for appropriate hyperparameter choices, the Storey and Quantile conformal data contamination tests can achieve the highest power. Moreover, when the outliers are easily separated from the inliers ($\mu_1=4$), $\lambda$ should be chosen as a very small value and the choice of $i_0$ should also be sufficiently small, meanwhile for $\mu_1=2$, $\lambda$ and $i_0$ should be chosen moderately. This highlights a weakness of the Storey and Quantile tests as they are sensitive to the hyperparameter choice. Note also that the Fisher test is relatively powerful when the outliers are easily separated, however it is very weak when this is not the case. This is a sensible observation when considering the function used in the Fisher test statistic as it emphasizes the information in very small p-values, but when the outliers are not easily separated, the p-values for the outliers will not be very small and so is less emphasized.
\begin{figure}
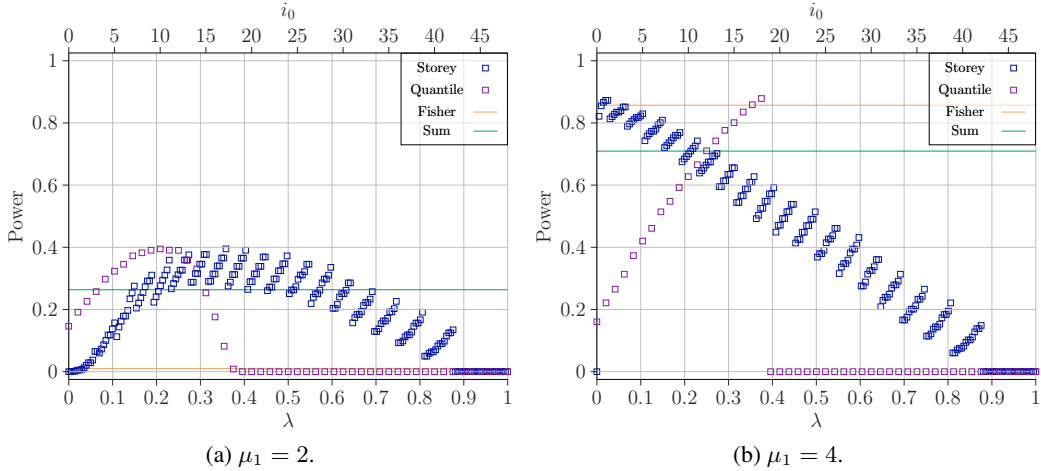

    \centering
    \begin{minipage}{0.49\linewidth}
        \centering
        \subfloat[$\mu_1 = 2$.]{\includegraphics[width=1\linewidth, page=8]{figures/__P7__Figures.pdf}\label{subfig:power_comparison_mu12}}
    \end{minipage}~
    \begin{minipage}{0.49\linewidth}
        \centering
        \subfloat[$\mu_1 = 4$.]{\includegraphics[width=1\linewidth, page=7]{figures/__P7__Figures.pdf}\label{subfig:power_comparison_mu14}}
    \end{minipage}
    \caption{Comparison of power of the different tests for varying hyperparameters.}
    \label{fig:simple_power_comparison}
\end{figure}

\subsection{Special case of two-sample testing}
In the particular case of $\pi_{\rm th} = 0$, the problem reverts to the classical two-sample testing problem. This can be solved using classical techniques such as Kolmogorov-Smirnov and Cramér-von Mises, but also permutation tests as in \cite{Konstantinou2024:Power}, as well as the combination tests of \cite{Bates2023:Testing}. Notably, the Fisher conformal data contamination test proposed in this work coincides with the Fisher combination test of \cite{Bates2023:Testing} for $\pi_{\rm th} = 0$.

The setup is the following: we observe two samples of data $\mathcal{D}_{0} = \{X_i\}_{i=1}^n$ and $\mathcal{D}_{1} = \{X_i\}_{i=n+1}^m$. Now, many different two-sample tests can be constructed by defining a summary statistic $\hat{s}(X)$, and then comparing the empirical distributions of the summary statistic on the two samples: $\hat{F}_0 = \hat{F}(\hat{s}(\mathcal{D}_{0}))$ and $\hat{F}_1 = \hat{F}(\hat{s}(\mathcal{D}_{1}))$, where $\hat{s}(\mathcal{D}_{1}) = \{\hat{s}(X_{n+i})\}_{i=1}^m$ and $\hat{F}$ computes the empirical \gls{cdf}. Now, a test statistic can be constructed from these two empirical distributions $T(\hat{F}_0, \hat{F}_1)$.

\paragraph{Kolmogorov-Smirnov test:}
A classic test statistic is the Kolmogorov-Smirnov statistic
\begin{equation*}
    T_{\rm KS}(\hat{F}_0, \hat{F}_1) = {\rm sup}_x |\hat{F}_0(x) - \hat{F}_1(x)|,
\end{equation*}
for which asymptotically as $n,m\to \infty$
\begin{equation*}
    \mathbb{P}\Bigg(T_{\rm KS}(\hat{F}_0, \hat{F}_1) > \sqrt{-{\rm ln}(\alpha/2)\frac{n+m}{2nm}}\Bigg) \leq \alpha,
\end{equation*}
with significance level $\alpha \in (0, 1)$.

\paragraph{Permutation tests:}
Permutation tests are based on constructing random permutations by randomly assigning data points from $\mathcal{D}_{0}$ and $\mathcal{D}_{1}$ to new splits $\mathcal{D}_{i,0}$ and $\mathcal{D}_{i,1}$ for $i=1,\dots I$ where $I$ is the number of permutations. The idea is that under the null, the test statistic $T(\hat{F}_0, \hat{F}_1)$ is exchangeable with $T_i = T(\hat{F}(\hat{s}(\mathcal{D}_{i, 0})), \hat{F}(\hat{s}(\mathcal{D}_{i, 1})))$ for $i=1,\dots I$ \citep{Konstantinou2024:Power}. This allows for computing a p-value based on the empirical distribution of the test statistic among the permutations, allowing for flexibility in the choice of test statistic $T$, however, at a computational cost. We consider as examples test statistic using the Kolmogorov-Smirnov difference, the $L_2$-norm, and a test statistic using \gls{ocsvm}. For the \gls{ocsvm}, a number of training permutations are made yielding $T_1,\dots,T_B$, followed by a number of calibration permutations $T_{B+1}, \dots, T_{I}$. Then, the \gls{ocsvm} is fitted to $T_1,\dots,T_B$, and the output scores of the \gls{ocsvm} on $T_{B+1}, \dots, T_I$ are used a calibration data with the \gls{ocsvm} score of $T(\hat{F}_0, \hat{F}_1)$ being the observed test statistic.

\paragraph{Numerical power comparison:}
A power comparison across $2000$ simulations of different two-sample tests for varying contamination factors when $\alpha=0.05$, $n=200$, $m=100$, $\mu_1 = 4$, $\lambda = \lfloor n/32 \rfloor/(n+1)$, and $i_0=5$, is shown in Figure~\ref{fig:two_sample}. We have used $200$ permutations for fitting OCSVM (in Perm. OCSVM), and $500$ permutations for calibration (in the permutation tests). Figure~\ref{fig:two_sample} indicates that the weakest tests are the Kolmogorov-Smirnov tests and the Sum conformal data contamination test, and the other five tests achieve relatively similar power.
% Fisher combination test achieves the highest power closely followed by some of the permutation tests and the Storey combination test, meanwhile the other combination tests show a relatively low power. This signifies that the conformal combination tests can reach the same power as the two-sample tests, with the Fisher combination test being the most powerful in this case while also not depending on hyperparameter selection.
We note that it intuitively makes sense that the Fisher combination test is powerful here as this test statistic emphasizes the contribution of very small p-values more so than the other test statistics, which is a desirable property when looking for a minimum of just one outlier. On the other hand, the Sum conformal data contamination test is relatively weak in this setting. These properties of these two tests were also discussed in the context of Figure~\ref{fig:simple_power_comparison}. Our main take-away is that the proposed conformal data contamination tests are competitive with state-of-the-art tests.
\begin{figure}
    \centering
    \includegraphics[width=0.5\linewidth, page=6]{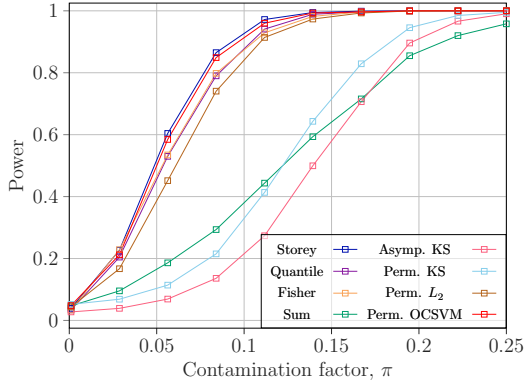}
    \caption{Comparison of the power of different two-sample tests for varying contamination factors.}
    \label{fig:two_sample}
\end{figure}
%
% \begin{table*}[t]
%     \centering
%     \caption{Power estimated across $2000$ simulations. Settings: $\alpha=0.05$, $n=1000$, $\ell=100$, $m=100$, $\mu_1=4$, $\lambda = 0.03$, $i_0=5$, $\pi_{\rm th}=0$, $\pi=0.1$, $200$ permutations for fitting, and $500$ permutations for calibration.}
%     \vspace{0.5cm}
%     \scriptsize
%     \label{tab:two_sample}
%     \begin{tabular}{@{}rlllllllll@{}}
%         \toprule
%                 & Asymp. KS & Perm. KS  & Perm. $L_2$   & Perm. OC-SVM  & Fisher    & Storey    & Quantile & Sum \\ \midrule
%         Power   & $0.311$   & $0.439$   & $0.933$       & $0.978$       & $0.982$   & $0.978$   & $0.100$  & $0.484$\\
%         \bottomrule
%     \end{tabular}
% \end{table*}
%

\subsection{Ablation study: test and calibration data size}
In this section we study the performance of the conformal data contamination tests for varying parameter settings. Specifically, the focus will be on varying contamination factors, $\pi$, calibration data sizes, $n-\ell$, and test data sizes, $m$. We set $\lambda = \lfloor n/12 \rfloor/(n+1)$, $i_0 = \lfloor m / 1.5\rfloor$, $\alpha=0.05$, $\pi_{\rm th} = 0.1$, and $\mu_1=4$.

Consider an ablation study for the contamination factor, $\pi$, and the test data size, $m$, with $n = 100$. We show in Figure~\ref{subfig:ablation_Gaussian_pi_m} the power estimated using $1000$ simulations for the conformal data contamination tests. We observe that the power relatively quickly goes to $1$ as the contamination factor increases, and there is a general tendency that the power increases with $m$, which is also to be expected. In this simulation study, the Storey conformal data contamination test achieves the highest power.

The influence of the calibration data size, $n-\ell$, with $m = 100$ and $\pi=0.3$, is visualized in Figure~\ref{subfig:ablation_Gaussian_n}. Here we show the power estimated using $10000$ simulations for the conformal data contamination tests. The Storey and Quantile conformal data contamination tests have the highest power, achieving a power near $1$ for $n-\ell \geq 20$. This highlights that powerful conformal data contamination tests are possible even for extremely modest calibration data sizes.
\begin{figure}[t]
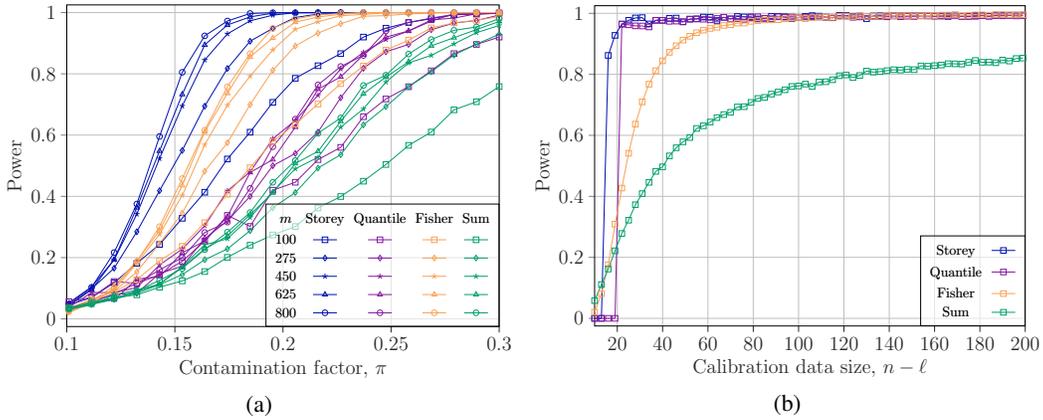

    \centering
    \begin{minipage}{0.49\linewidth}
        \centering
        \subfloat[]{\includegraphics[width=1\linewidth, page=4]{figures/__P7__Figures.pdf}\label{subfig:ablation_Gaussian_pi_m}}
    \end{minipage}~
    \begin{minipage}{0.49\linewidth}
        \centering
        \subfloat[]{\includegraphics[width=1\linewidth, page=5]{figures/__P7__Figures.pdf}\label{subfig:ablation_Gaussian_n}}
    \end{minipage}
    \caption{Ablation study for varying contamination factors, $\pi$, calibration data sizes, $n-\ell$, and test data sizes, $m$}
    \label{fig:ablation_Gaussian}
\end{figure}

With the insights gained through these numerical experiments we can highlight an advantage of the conformal data contamination tests. Conformal outlier detection relies on having a calibration data size, $n-\ell$, relatively large compared to the test data size, $m$, see \cite{Mary2022:Semi}. Meanwhile, conformal data contamination tests only improve with increasing $m$, and so are more convenient in settings with small $n-\ell$ and large $m$.

\subsection{Multiple testing with conformal data contamination p-values}
Consider the same scenario as before except now $K=20$ agents are sending test data for which $K_0=10$ of them have $\pi_0=\pi_{\rm th}$ and the remaining $K-K_0=10$ have $\pi_1 > \pi_{\rm th}$. We use Storey's \gls{bh} procedure on the sequence of conformal data contamination p-values with Storey's hyperparameter $\gamma=0.5$.

In Figure~\ref{fig:Gaussian_fdr_tdr_curves} we show \gls{fdr} and \gls{tdr} curves estimated by $2000$ simulations for the four different conformal data contamination tests presented in Section~\ref{sec:results}, when $\alpha = 0.05$, $\mu_1 = 4$, $n=200$, $m=100$, $\pi_{\rm th}=0.2$, $\pi_1=0.3$, $\lambda = \lfloor n/32 \rfloor/(n+1)$, and $i_0 = \lfloor m/1.5\rfloor$. We notice from Figure~\ref{subfig:Gaussian_fdr} that all the methods are conservative. In terms of power Figure~\ref{subfig:Gaussian_tdr} shows that the Storey conformal data contamination test achieves the highest power, with comparable power between Quantile and Fisher test, and the Sum test tends to have the lowest power. We emphasize that this does not mean that in general the Sum test is the worst alternative among those considered here. Which of the proposed tests performs the best depends on many factors, herein $\pi_{\rm th}$ and the conformal score.
% % 
% \begin{figure}
%     \centering
%     \begin{minipage}{0.49\linewidth}
%         \centering
%         \subfloat[False discovery rate (FDR).]{\includegraphics[width=1\linewidth]{figures/TestStatistic_Multiple/FDR_mu3_n400_m40_pinull0.40_pialt0.60_K20_K010.png}\label{subfig:fdr}}
%     \end{minipage}~
%     \begin{minipage}{0.49\linewidth}
%         \centering
%         \subfloat[True discovery rate (TDR).]{\includegraphics[width=1\linewidth]{figures/TestStatistic_Multiple/TDR_mu3_n400_m40_pinull0.40_pialt0.60_K20_K010.png}\label{subfig:tdr}}
%     \end{minipage}
%     \caption{Comparison of FDR and TDR estimated of the different tests using $2000$ simulations when $\alpha = 0.05$, $\mu_1 = 3$, $n=400$, $m=40$, $K=20$, $K_0=10$, $\ell=200$, $\pi_0=0.4$, $\pi_1=0.6$, and $\pi_{\rm th}=0.4$. We set $\lambda=5/(n+1)$ and $i_0 = m-Q_{m, \pi_{\rm th}}(0.998) = 15$ where $Q_{m, \pi_{\rm th}}(0.998)$ is the $0.998$ quantile of the binomial distribution with parameters $m$ and $\pi_{\rm th}$.}
%     \label{fig:fdr_tdr_curves}
% \end{figure}
% %
% 
\begin{figure}[t]
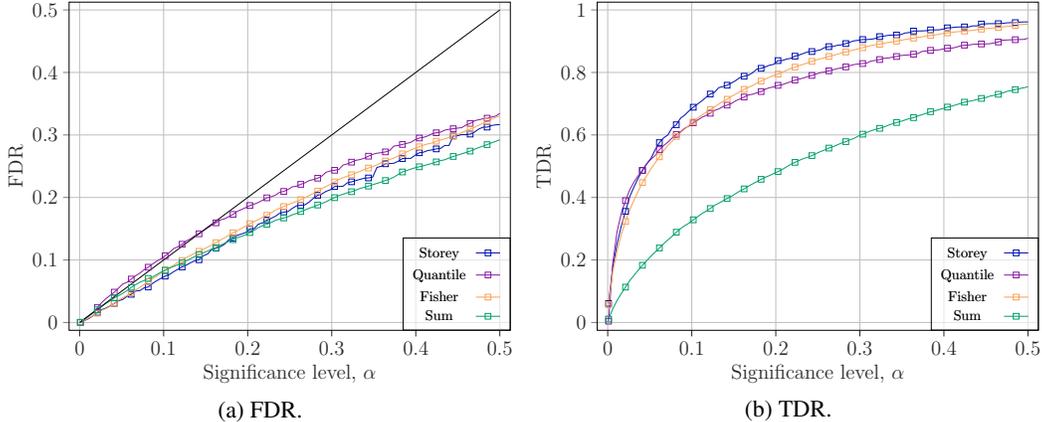

    \centering
    \begin{minipage}{0.49\linewidth}
        \centering
        \subfloat[FDR.]{\includegraphics[width=1\linewidth, page=10]{figures/__P7__Figures.pdf}\label{subfig:Gaussian_fdr}}
    \end{minipage}~
    \begin{minipage}{0.49\linewidth}
        \centering
        \subfloat[TDR.]{\includegraphics[width=1\linewidth, page=9]{figures/__P7__Figures.pdf}\label{subfig:Gaussian_tdr}}
    \end{minipage}
    \caption{Comparison of FDR and TDR estimated of the different tests.}
    \label{fig:Gaussian_fdr_tdr_curves}
\end{figure}
%

% \section{Details and limitations of numerical experiments}

% \subsection{Details and limitations of conformal score}
% \mv{Details of implementation.}
% \mv{Other options.}

% \subsection{Details and limitations of classifier}
% \mv{Details of implementation.}
% \mv{Other options.}

% \subsection{Details on computing resources}
% \mv{Server specifications.}

% \subsection{Limitations of the proposed data sharing procedure}

\section{Data-driven hyperparameter selection}\label{sec:hyperparameter_selection}
For the proposed collaborative data sharing procedure there are a number of hyperparameters, see Section~\ref{sec:interpretations} for an overview. In this section, we will consider a data-driven approach to selecting key hyperparameters, particularly, we will be interested in the selection of $K_{\rm budget}$ considering that we may be in a scenario without budgetary restrictions. We observed in Figure~\ref{fig:BudgetAccuracy} a concave relation between the accuracy and $K_{\rm budget}$, and so in practice we may be interested in selecting the optimal $K_{\rm budget}$ in terms of accuracy.

The data-driven hyperparameter selection we propose here can be summarized as: (i) order the $K$ data agents in terms of contamination and determine $\hat{\mathcal{H}}_0$; (ii) fit the classification model on the data $\{Z_i\}_{i=1}^{\ell} \bigcup \{Z_i^k : i\in[m], k \in \hat{\mathcal{H}}_0\}$; (iii) evaluate the accuracy on the data $\{Z_i\}_{i=\ell+1}^n$ denoted $L_{\rm val}(K_{\rm budget})$ noting that $\hat{\mathcal{H}}_0$ depends on the hyperparameter $K_{\rm budget}$; (iv) set the hyperparameter as $\hat{K}_{\rm budget} = \argmax_{K_{\rm budget} \in G} L_{\rm val}(K_{\rm budget})$ where $G$ is a grid of hyperparameter values.

In Table~\ref{tab:BudgetAccuracyCV}, we show the mean and standard deviation of the accuracy $(\%)$ across $500$ data simulations when using the data-driven hyperparameter selection. In all cases, we consider the settings as described in Section~\ref{subsec:data_setup}. We observe that we are not quite able to reach the peak accuracies observed in Figure~\ref{fig:BudgetAccuracy} with the proposed methods, however, it is quite close. For this reason, we deem that the data-driven hyperparameter selection paves the way for a practical deployment of the procedure, and note that the optimization of other hyperparameters could also be done with the same approach. Notice also that for the \emph{random} baseline, the mean accuracy surpasses the highest accuracy observed in Figure~\ref{fig:BudgetAccuracy}, due to the implicit dependency on the contamination factors in choosing $K_{\rm budget}$.
\setlength{\tabcolsep}{0.4em} % for the horizontal padding
\begin{table*}
    \centering
    \caption{Mean (standard deviation) of accuracy $(\%)$ across $500$ data simulations with data-driven hyperparameter selection.}
    \vspace{0.5cm}
    \scriptsize
    \label{tab:BudgetAccuracyCV}
    \begin{tabular}{@{}ll|llllll@{}}
        \toprule
        & Best & No sharing & Random & Storey & Quantile & Fisher & Sum \\ \midrule
        Label noise & 92.529 (3.140) & 90.793 (1.913) & 91.059 (4.527) & 92.394 (3.485) & 92.448 (3.357) & \textbf{92.472 (3.312)} & 92.468 (3.334) \\
        Feature noise & 98.005 (0.280) & 90.793 (1.913) & 97.951 (0.292) & 97.995 (0.279) & 97.996 (0.293) & \textbf{98.004 (0.282)} & 97.996 (0.289)  \\
        FEMNIST & 90.698 (1.072) & 85.435 (1.627) & 90.248 (1.187) & 90.283 (1.171) & \textbf{90.589 (1.095)} & 90.436 (1.117) & 90.564 (1.089) \\
        \bottomrule
    \end{tabular}
\end{table*}

\section{Additional numerical results: MNIST and FEMNIST}\label{sec:boxplots}
We report in Figure~\ref{fig:accuracy_boxplots}, boxplots of the accuracies for the scenarios also considered in Figure~\ref{fig:BudgetAccuracy}, setting $K_{\rm budget} = 5$. The boxplots show the $0.05$ and $0.95$ quantiles as the outer fliers, the inner box shows the region from $0.25$ to $0.75$ quantiles, and the central line is the median, with the mean indicated by a triangle.
% In addition to using \gls{svc}, we use also a \gls{mlp} classifier with the standard implementation in \textit{scikit-learn}.

We notice from Figure~\ref{fig:accuracy_boxplots} that the proposed methods have higher mean (and median) than the \emph{random} baseline in all cases (except the Storey test on FEMNIST data), however, benefits are mostly pronounced in the case of \emph{label noise}. Note that for the case of \emph{label noise}, the performance in terms of accuracy of the \emph{random} baseline varies substantially more than the proposed methods, showing a severe loss in accuracy in some simulations with a $0.05$ quantile of the accuracy of only $75~\%$.
\begin{figure}
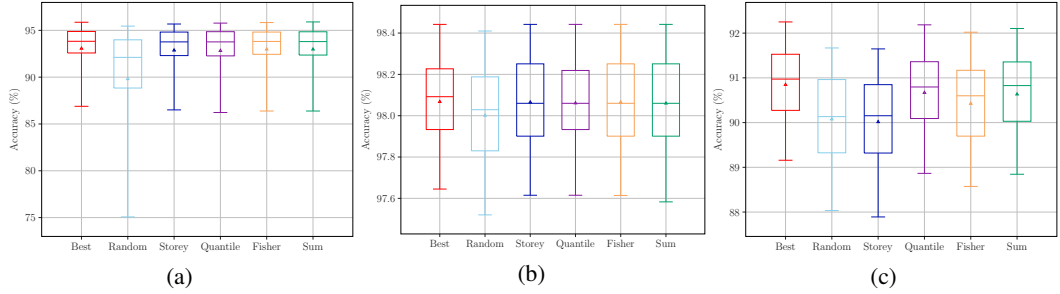

    \centering
    \begin{minipage}{0.33\linewidth}
        \centering
        \subfloat[]{\includegraphics[width=1\linewidth, page=29]{figures/__P7__Figures.pdf}}
    \end{minipage}~%\\\vspace{\baselineskip}
    \begin{minipage}{0.33\linewidth}
        \centering
        \subfloat[]{\includegraphics[width=1\linewidth, page=30]{figures/__P7__Figures.pdf}}
    \end{minipage}~%\\\vspace{\baselineskip}
    \begin{minipage}{0.33\linewidth}
        \centering
        \subfloat[]{\includegraphics[width=1\linewidth, page=31]{figures/__P7__Figures.pdf}}
    \end{minipage}
    \caption{Boxplots of the model accuracy of SVC recorded for $300$ simulations against the collaboration budget when classifying three of the classes from MNIST (digits 1, 4, 7) or FEMNIST (letters A, B, C) for baselines and proposed methods (with fixed budget). (a) MNIST \emph{label noise}: $\ell = 60$, $n=100$, $m=40$; (b) MNIST \emph{feature noise}: $\ell = 60$, $n=100$, $m=40$; (c) FEMNIST \emph{lower- and uppercase mixture}: $\ell=120$, $n=200$, $m=80$.}
    % \caption{Boxplots of the accuracy as recorded for $300$ simulations. (a) MNIST \emph{label noise} with SVC; (b) MNIST \emph{feature noise} with SVC; (c) FEMNIST \emph{lower- and uppercase mixture} with SVC.}
    \label{fig:accuracy_boxplots}
\end{figure}

\bibliography{bib.bib}

\end{document}